\documentclass{article}  
\PassOptionsToPackage{numbers,sort&compress}{natbib} 
\usepackage[main,final]{neurips_2026}
\setcitestyle{numbers,square,comma,sort&compress}
\usepackage{comment}

\usepackage[utf8]{inputenc}  
\usepackage[T1]{fontenc}  
\usepackage[hypertexnames=false]{hyperref}  
\usepackage{url}  
\usepackage{booktabs}  
\usepackage{amsfonts,amsmath,amssymb,amsthm,mathtools,bm}  
\usepackage{nicefrac}  
\usepackage{microtype}  
\usepackage{xcolor}  
\usepackage{enumitem}  
\usepackage{algorithm}  
\usepackage{algpseudocode}  
\usepackage{graphicx}  
\usepackage{longtable}  
\usepackage{array}  
\usepackage{tikz}  
\usepackage{pgfplots}  
\usepackage{pgfplotstable}  
\usepackage{multirow}  
\usepackage{tabularx}  
\usepackage{float}
\usepackage{threeparttable}  
\usepackage{makecell}  
\pgfplotsset{compat=1.17}  
\usepgfplotslibrary{fillbetween,groupplots}  

\hypersetup{  
	colorlinks=true,  
	linkcolor=blue,  
	citecolor=blue,  
	urlcolor=blue  
}  

\makeatletter  
\renewcommand{\theHfigure}{\arabic{figure}}  
\renewcommand{\theHtable}{\arabic{table}}  
\renewcommand{\theHalgorithm}{\arabic{algorithm}}  
\makeatother

\definecolor{calblue}{RGB}{33,102,172}  
\definecolor{syncgreen}{RGB}{31,158,137}  
\definecolor{partorange}{RGB}{230,97,1}  
\definecolor{badred}{RGB}{178,24,43}  
\definecolor{pcapurple}{RGB}{94,60,153}  
\definecolor{softgray}{RGB}{90,90,90}  

\newtheorem{theorem}{Theorem}  
\newtheorem{proposition}[theorem]{Proposition}  
\newtheorem{lemma}[theorem]{Lemma}  
\newtheorem{corollary}[theorem]{Corollary}  
\newtheorem{definition}[theorem]{Definition}  
  
\newtheorem{remark}[theorem]{Remark}  
\newtheorem{conjecture}[theorem]{Conjecture}  

\newcommand{\E}{\mathbb E}  
\newcommand{\Pbb}{\mathbb P}  
\newcommand{\R}{\mathbb R}  
\newcommand{\cV}{\mathcal V}  
  
\newcommand{\KL}{\mathrm{KL}}  
\newcommand{\MI}{\mathrm I}  
  
\newcommand{\Corr}{\mathrm{Corr}}  
  
\newcommand{\Unif}{\mathrm{Unif}}  
\newcommand{\Ber}{\mathrm{Ber}}  
\newcommand{\Ksp}{K_{\mathrm{sp}}}  
\newcommand{\Nsrc}{N_{\mathrm{src}}}  
\newcommand{\epsrec}{\varepsilon_{\mathrm{rec}}}  
\newcommand{\Supp}{\mathrm{supp}}  
\newcommand{\sgn}{\mathrm{sign}}  
\newcommand{\norm}[1]{\left\lVert #1\right\rVert}  
\newcommand{\ip}[2]{\left\langle #1,#2\right\rangle}  

\setlist[itemize]{leftmargin=1.2em,itemsep=0.05em,topsep=0.15em}  
\setlist[enumerate]{leftmargin=1.2em,itemsep=0.05em,topsep=0.15em}  

\newcommand{\ind}{\mathbf{1}}

\newcommand{\GSL}{\textsc{GSL}}

\newcommand{\AlignedMass}{\emph{aligned mass}}  
  
\newcommand{\Labels}{\textsc{Labels}}  
\newcommand{\Calibration}{\textsc{Calibration}}

\title{  
	Can Metadata Fix the Gauge?\\  
	Calibration Turns Sparse Multi-Source Learning\\  
	from Sparse PCA into Sparse Mean Recovery  
}

\author{%
		Yirong Xiang\\
	Air Force Medical Center\\
	\texttt{yirong\_xiang@163.com}
	\And
	Yibo Zhou\thanks{Corresponding author.}\\
Beihang University\\
\texttt{ybzhou@buaa.edu.cn}
}

\begin{document}  
	\maketitle  
\begin{abstract}    
	Source labels tell us where a sample came from, but not how that source oriented its coordinates. In a sparse Gaussian multi-source model with source-specific sign gauges, that distinction changes the learning problem. Without calibration, pooling across sources erases the mean signal, and the source-labelled experiment reduces to sparse PCA. With task-independent relative-gauge metadata, synchronization aligns sources and the same observations become a sparse mean problem. We formalize the value of metadata through \AlignedMass{}, which quantifies how much first-order signal a metadata-induced alignment restores. Source labels create only limited \AlignedMass{}, whereas valid calibration can create \AlignedMass{} proportional to the full sample size. This yields a statistical separation in micro-source regimes and, under standard sparse-PCA hardness assumptions, a computational separation as well. Synthetic experiments check the predicted scaling; controlled real-feature
	and physical sensor studies examine sign cancellation, separately
	acquired calibration, and reuse across tasks.
\end{abstract}  
	
	\vspace{-0.5em}  
	
	\section{Introduction}  
	
Multi-source learning is written as if every source uses the same coordinate convention. In practice this convention can fail. A sensor polarity may be reversed, a scanner pipeline may reflect a latent factor, or a client representation may flip the sign of a hidden component. Within each source the  
signal may be coherent, but across sources naive pooling can cancel it. This paper asks when metadata can fix such a source-specific gauge, and why this is different from merely knowing source labels.  

A concrete example is a sensor whose mounting reverses the sign of a measured
axis. Source identity tells us which recordings share a mounting, but not
whether two mountings use the same or opposite axis convention. A separately
recorded reference motion can provide that comparison before any downstream
task is observed. In our physical sensor study
(Section~\ref{sec:physical-calibration}), these same/opposite records are
synchronized once and reused across four classification tasks. Here
\emph{synchronization} means estimating consistent source orientations up to
one global sign; \emph{linearization} means averaging after this alignment
to restore a mean signal. We study this question in a stylized sparse Gaussian model. Each source observes the  same unknown sparse direction, but through an unknown source-specific sign gauge.  
Source labels tell the learner which samples belong together, yet they do not reveal how one source is oriented relative to another. Calibration metadata is different: relative polarity logs, orientation edges, or noisy absolute orientation records can carry information about the source gauges. Such metadata can be synchronized into a common convention, using ideas related to synchronization over \(\mathbb Z_2\) and other groups \citep{singer2011angular,bandeira2013cheeger}. The main message is that these two kinds of side information induce different statistical experiments. With only source labels, the learner can aggregate within each source, but the unknown gauges remove first-order signal across sources. The resulting source-labelled experiment is exactly sparse PCA, plus residual Gaussian noise independent of the task direction. With valid  
	relative-gauge metadata, synchronization selects a gauge section and restores  
	first-order structure. Conditional on source amplitudes, the same downstream  
	observations become a sparse mean problem. Thus labels identify groups, whereas  
	calibration identifies orientation.  
	
Source identifiers, group labels, and institution tags are useful for organizing data and modeling heterogeneity  
\citep{bendavid2010theory,kairouz2021federated,arjovsky2019invariant,sagawa2020distributionally,koh2021wilds,zhou2025heterogeneous},but they do not by themselves specify how source-specific coordinate conventions should be aligned.  
Calibration metadata plays a narrower role: it provides information about nuisance source conventions rather than the downstream task.   
In our model, this distinction changes the statistical experiment: the available signal changes from second-order to first-order, leading to different statistical and computational thresholds.  
	
	To quantify the value of metadata, we introduce \AlignedMass{}. Given metadata-induced sample weights \(q\) and the hidden gauge vector
	\(g\), \AlignedMass{} is \(n\) times their squared cosine similarity.
	In the constant-amplitude companion model, \(\theta^2M_q(g)\) is
	exactly the squared first-order signal strength. In the source-Gaussian model it still controls the nominal first-order  
	signal, while an aligned-source-count condition controls concentration of random  
	source amplitudes. This viewpoint makes the comparison sharp: source labels can  
	create only limited \AlignedMass{} because they know which samples share a source  
	but not which way that source points; valid relative-gauge metadata can create  
	\AlignedMass{} proportional to the full sample size.  
	
	The resulting separation is most transparent in the micro-source regime, where  
	each source is individually weak. Calibrated recovery succeeds at the sparse-mean  
	scale, while source-labelled uncalibrated recovery is statistically insufficient  
	at the corresponding sparse-PCA scale. Under standard sparse-PCA hardness  
	assumptions  
	\citep{berthet2013complexity,deshpande2014sparse}, there is also a conditional  
	computational separation: calibrated recovery is polynomial-time after  
	synchronization, whereas source-labelled recovery remains as hard as sparse PCA  
	through the usual hard window.  
	
	A useful feature of calibration metadata is that it is task-independent. It  
	fixes a source convention rather than revealing the downstream sparse direction.  
	One relative-gauge graph can therefore be collected and synchronized once, then  
	reused across many independent downstream tasks sharing the same source gauges.  
	This differs from feature- or label-derived alignment, which may be useful but  
	uses task-coupled information rather than external calibration. Our contributions are:  
	
	\begin{enumerate}[leftmargin=1.3em,itemsep=0.15em,topsep=0.15em]  
		\item We identify a sparse multi-source setting in which source-labelled  
		uncalibrated data is statistically equivalent to sparse PCA, while calibrated  
		data becomes sparse mean recovery.  
		\item We introduce \AlignedMass{} as a quantitative measure of the first-order  
		value of metadata-induced alignment, and show that source labels cannot  
		reliably create extensive \AlignedMass{} whereas valid relative-gauge metadata  
		can.  
		\item We derive an unconditional statistical separation in micro-source  regimes and a conditional computational separation under standard sparse-PCA  hardness. Experiments check phase transitions, uninformative or mismatched
		metadata controls, calibration reuse, and controlled real-feature behavior.
	\end{enumerate}  

Spectral synchronization, weighted averaging, and top-\(k\) thresholding are
standard components. Our contribution is the experiment-level converse
without calibration, together with the quantitative link from noisy
calibration to restored first-order signal, rather than a new
synchronization algorithm.
	
	\section{Model, metadata, and alignment}  
	\label{sec:model}  
	
	\subsection{A sparse multi-source model}  
	
	Let  $
	\cV_{d,k}  
	=  
	\{v\in\{0,\pm 1/\sqrt{k}\}^d:\norm{v}_0=k\},  
	\Ksp=k\log(ed/k).  
	$ There are \(L\) sources, each contributing \(s\) samples, so the total sample size is \(n=Ls\). The unknown sparse direction is \(V\in\cV_{d,k}\). Source \(\ell\) has an unknown sign gauge \(\sigma_\ell\stackrel{\mathrm{i.i.d.}}{\sim}\Unif\{\pm1\}\) and a nonnegative amplitude \(R_\ell\stackrel{\mathrm{i.i.d.}}{\sim}|N(0,1)|\), independent of \(\sigma_\ell\). Writing \(a_\ell=\sigma_\ell R_\ell\sim N(0,1)\), the observation model is  
	\[  
	W_{\ell r}  
	=  
	\theta R_\ell \sigma_\ell V+\varepsilon_{\ell r},  
	\qquad  
	\varepsilon_{\ell r}\sim N(0,I_d),  
	\qquad  
	\ell=1,\dots,L,\ r=1,\dots,s.  
	\]  
	Here \(\theta>0\) is the signal magnitude,
	\(\norm{v}_0\) counts nonzero coordinates, and
	\(\Ksp=k\log(ed/k)\) is the sparse-complexity term, with \(e\) Euler's number
	and \(\log\) the natural logarithm. The feature noises are mutually
	independent and independent of \(V,\sigma,R\).
	The Rademacher gauge prior means that each source sign is independently
	and uniformly sampled from \(\{-1,+1\}\).
	The downstream theoretical task is to recover \(V\) up to a global sign,
	or its support \(\Supp(V)=\{j:V_j\neq0\}\).
	We write \(\gtrsim\), \(\lesssim\), and \(\asymp\) for inequalities or
	equivalence up to universal positive constants. The same statistic arises from a simple two-view classification model; the derivation appears in Appendix~\ref{app:two-view}.  The model separates two roles that are easy to conflate:  
	
	\begin{itemize}[leftmargin=1.3em,itemsep=0.15em]  
		\item \(\sigma_\ell\) is a \emph{coordinate convention}. This is the part calibration metadata may reveal.  
		\item \(R_\ell\) is a \emph{source amplitude}. This is nuisance randomness and is not something valid metadata should depend on.  
	\end{itemize}  
	
Probabilities involving \(\sigma\) are with respect to this Rademacher prior. We also use the constant-amplitude companion \(W_{\ell r}=\theta\sigma_\ell V+\varepsilon_{\ell r}\), where \AlignedMass{} is exact.  

\subsection{What counts as valid calibration metadata?}

Source labels identify which samples belong to the same source.
Calibration metadata instead provides information about source signs.
A relative-polarity log or orientation edge is a noisy record of whether
two sources use the same or opposite sign convention; a noisy absolute
orientation record compares one source with a known reference.

Let \(E\) contain only exogenous source-side structure, such as source
identities, source sizes, calibration-graph topology, and a pre-specified
calibration design. In the theoretical model, \(E\) is fixed or sampled
independently of \(\sigma,R,V\) and task randomness; it cannot encode
downstream labels or features. Let \(D_{\rm task}\) collect downstream
observations, their feature noise, and any downstream labels. In the
two-view example, these labels are \(Y_{\ell r}\); the main recovery model
requires no additional labels.

\begin{definition}[Valid calibration metadata]
	A metadata channel \(\mathfrak M\) is \emph{valid calibration metadata} if
	\[
	P(\mathfrak M\mid V,R,D_{\rm task},\sigma,E)
	=
	P(\mathfrak M\mid \sigma,E).
	\]
	Thus, conditional on source signs and the exogenous source design,
	calibration is independent of the downstream direction, amplitudes,
	observations, labels, and feature noise.
\end{definition}

Validity concerns task independence, not accuracy. Random metadata can be
valid but uninformative; missing or independently corrupted calibration
records can also remain valid. Conversely, feature- or label-derived
alignment can be useful while violating task independence.

With externally collected valid calibration, weights are estimated without
using any downstream observations, so scarce micro-source samples need
not be split to estimate alignment. Sample splitting is relevant to
task-derived comparison procedures: it can separate weight estimation
from the noise in a final statistic, but does not make that alignment
task-independent or reusable calibration.

	\subsection{\AlignedMass{}: how much first-order signal metadata restores}
	
	Let \(q\in\R^n\) be nonzero sample weights constructed from calibration
	metadata and source structure. For example, estimated source signs
	\(\widehat\sigma_\ell\) give \(q_{\ell r}=\widehat\sigma_\ell\);
	spectral synchronization can instead supply real-valued source weights.
	We also call such metadata-derived weights a \emph{section}.
	At the sample level, the hidden sign vector is
	\(g_{\ell r}=\sigma_\ell\).
	
	The constant-amplitude companion model makes the role of these weights
	explicit:
	\[
	X_q
	=
	\frac{\sum_{\ell,r}q_{\ell r}W_{\ell r}}{\norm q_2}
	=
	\theta\frac{q^\top g}{\norm q_2}V+Z,
	\qquad Z\sim N(0,I_d).
	\]
	This motivates
	\[
	M_q(g)=\frac{(q^\top g)^2}{\norm q_2^2},
	\]
	because \(\theta^2M_q(g)\) is exactly the squared restored mean-shift
	strength. Since \(\norm g_2^2=n\),
	\[
	0\le M_q(g)\le n,
	\qquad
	\frac{M_q(g)}{n}
	=
	\frac{(q^\top g)^2}{\norm q_2^2\norm g_2^2}
	=
	\cos^2(q,g).
	\]
	Thus \AlignedMass{} is an effective first-order sample size, and
	\(M_q(g)/n\) is normalized alignment. No assumption
	\(\norm q_2=\norm g_2\) is needed. Exact aligned mass is a theoretical and
	evaluation quantity, not an algorithm input: the algorithm uses \(q\),
	not the unknown \(g\). Planted signs are available for evaluation in
	simulations; recorded mounting signs in the physical study are used only
	for alignment evaluation and the oracle control.
	
	For equal source size \(s\), we can also quantify the metadata channel's
	posterior expected alignment. Define the posterior source-sign correlation
	matrix
	\[
	\Gamma(\mathfrak M,E)
	=
	\E[\sigma\sigma^\top\mid\mathfrak M,E]
	\in\R^{L\times L},
	\]
	and its alignment capacity
	\[
	\mathcal C(\mathfrak M,E)
	=
	\sup_{q=q(\mathfrak M,E)\neq0}
	\E[M_q(g)\mid\mathfrak M,E].
	\]
	The supremum is over deterministic metadata-measurable weights;
	independent algorithmic randomness may be conditioned on.
	
	\begin{proposition}[Posterior alignment capacity]
		\label{prop:alignment-capacity}
		For equal source size \(s\),
		\[
		\mathcal C(\mathfrak M,E)
		=
		s\,\lambda_{\max}\!\left(\Gamma(\mathfrak M,E)\right),
		\]
		where \(\lambda_{\max}\) denotes the largest eigenvalue.
		An optimal source-level weighting is a top eigenvector of
		\(\Gamma(\mathfrak M,E)\).
	\end{proposition}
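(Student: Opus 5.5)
The plan is to condition on \((\mathfrak M,E)\), so that any admissible weight vector \(q=q(\mathfrak M,E)\) becomes a fixed nonzero vector in \(\R^n\). The only remaining randomness in \(M_q(g)\) is then the hidden sign vector \(g\), through its posterior given the metadata. Since \(\norm q_2\) is fixed under this conditioning, linearity of conditional expectation gives
\[
\E[M_q(g)\mid\mathfrak M,E]
=\frac{q^\top \E[gg^\top\mid\mathfrak M,E]\,q}{\norm q_2^2}.
\]
This is a Rayleigh quotient of the sample-level posterior correlation matrix \(G=\E[gg^\top\mid\mathfrak M,E]\in\R^{n\times n}\). If \(\mathfrak M\) is generated with independent algorithmic randomness, I will condition on that randomness as well, as the statement allows, so \(q\) is still deterministic.

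Next I identify \(G\). Since \(g_{\ell r}=\sigma_\ell\), ordering the samples source by source gives \(g=\sigma\otimes \mathbf 1_s\), and hence
\[
G=\Gamma(\mathfrak M,E)\otimes J_s,\qquad J_s=\mathbf 1_s\mathbf 1_s^\top .
\]
The spectrum of a Kronecker product consists of the products of eigenvalues. Here \(J_s\) has eigenvalues \(s\) (once) and \(0\) (with multiplicity \(s-1\)), and \(\Gamma\) is positive semidefinite, being a conditional expectation of the rank-one PSD matrices \(\sigma\sigma^\top\). Therefore \(\lambda_{\max}(G)=s\,\lambda_{\max}(\Gamma)\).

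By the Rayleigh--Ritz characterization,
\[
\sup_{q\neq0}\frac{q^\top Gq}{\norm q_2^2}=s\,\lambda_{\max}(\Gamma).
\]
So \(\E[M_q(g)\mid\mathfrak M,E]\le s\lambda_{\max}(\Gamma)\) for every admissible \(q\), which gives \(\mathcal C\le s\lambda_{\max}(\Gamma)\). For the reverse inequality, take \(u\) to be a top eigenvector of \(\Gamma\) and set \(q_{\ell r}=u_\ell\), that is, \(q=u\otimes\mathbf 1_s\). Then \(Gq=(\Gamma u)\otimes(J_s\mathbf 1_s)=s\lambda_{\max}\,q\), so this \(q\) attains the bound. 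This shows that the supremum is a maximum and that an optimal source-level weighting is a top eigenvector of \(\Gamma\).

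The step needing the most care is making sure this choice of \(q\) is admissible, i.e.\ metadata-measurable. The matrix \(\Gamma\) is a function of \((\mathfrak M,E)\). A top eigenvector can be selected measurably, for instance by a fixed tie-breaking rule or as the output of a deterministic eigen-solver. Hence \(u\), and therefore \(q\), is a deterministic function of \((\mathfrak M,E)\). Under this conditioning the supremum over metadata-measurable \(q\) coincides with the supremum over all fixed nonzero vectors, and the two inequalities together give \(\mathcal C(\mathfrak M,E)=s\,\lambda_{\max}(\Gamma(\mathfrak M,E))\).
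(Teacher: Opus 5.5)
Your proof is correct. It uses the same Rayleigh-quotient idea as the paper's, organized differently.

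You lift to the sample level, identify $\E[gg^\top\mid\mathfrak M,E]=\Gamma\otimes J_s$, and read off $s\,\lambda_{\max}(\Gamma)$ from the Kronecker spectrum. The paper instead compresses to the source totals $Q_\ell=\sum_r q_{\ell r}$ and writes the conditional aligned mass as $Q^\top\Gamma Q/\norm q_2^2$. It then uses within-source Cauchy--Schwarz, $\norm q_2^2\ge\norm Q_2^2/s$, together with $\Gamma\succeq0$ and a separate $Q=0$ case, to reduce to an $L\times L$ Rayleigh quotient.

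The two proofs encode the same fact. The eigenvalue $s$ of $J_s$ on $\mathbf 1_s$ and its zero eigenvalues on $\mathbf 1_s^{\perp}$ are exactly the equality and slack cases of that Cauchy--Schwarz step.

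Your packaging has some advantages:
\begin{itemize}
\item It absorbs the $Q=0$ case automatically, since such $q$ lie in the kernel of $\Gamma\otimes J_s$.
\item Attainment becomes a one-line eigenvector check.
\item It addresses measurable selection of the eigenvector, which the paper leaves implicit.
\item It characterizes all optimizers. Since $\Gamma$ has unit diagonal, $\lambda_{\max}(\Gamma)\ge1>0$, so the top eigenspace of $\Gamma\otimes J_s$ consists only of vectors $u\otimes\mathbf 1_s$ with $u$ a top eigenvector of $\Gamma$. In other words, optimal sections are uniform within each source.
\end{itemize}

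The paper's compression has its own advantage. The same $Q_\ell$ reduction and Cauchy--Schwarz bound are reused directly in the proof of Lemma~\ref{lem:no-section-main} and its unequal-size variant.

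One small correction. The clause about independent algorithmic randomness refers to randomness used in constructing $q$. Conditioning on that randomness leaves the posterior $\Gamma$ unchanged, because it is independent of $\sigma$ given $(\mathfrak M,E)$. Conditioning on randomness used to generate $\mathfrak M$ itself, such as the edge noises $\nu_{\ell m}$, would in general change the posterior and hence $\Gamma$.
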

	
	Under the independent Rademacher prior, source labels give
	\(\Gamma=I_L\), whereas perfect relative calibration determines
	\(\sigma\sigma^\top\), even though the global sign remains unknown.
	Consequently,
	\[
	\text{\Labels}:\ \mathcal C=s
	\qquad\qquad
	\text{\Calibration}:\ \mathcal C=n.
	\]
	This is a posterior expectation statement; the synchronization result
	below controls realized alignment with high probability.
	
	In the source-Gaussian model, amplitudes vary across sources. Write
	\[
	Q_\ell(q)=\sum_{r=1}^s q_{\ell r},
	\]
	and define the aligned source count
	\[
	\Nsrc(q,g)
	=
	\frac{\left(\sum_{\ell=1}^LQ_\ell(q)\sigma_\ell\right)^2}
	{\sum_{\ell=1}^LQ_\ell(q)^2},
	\]
	with value zero when the denominator vanishes.
	\AlignedMass{} controls the nominal first-order signal, while
	\(\Nsrc\) controls concentration of the weighted random amplitudes around
	their mean. High aligned mass concentrated on too few sources need not
	produce a stable realized signal.
	
	\section{Main results}  
	\label{sec:theory}  
	
	We first show that source labels alone expose only second-order information. We then show how calibration metadata restores first-order structure.  
	
	\subsection{Source labels alone reduce to sparse PCA}  
	
If source labels are available but gauges are not aligned, the sufficient source  
statistic is  
\(X_\ell=s^{-1/2}\sum_r W_{\ell r}=\theta\sqrt{s}\,a_\ell V+Z_\ell\), with  
\(Z_\ell\sim N(0,I_d)\). Since \(a_\ell\sim N(0,1)\),  
\(X_\ell\sim N(0,I_d+\lambda VV^\top)\), where \(\lambda=\theta^2s\).

	\begin{proposition}[Source-labelled data is sparse PCA]  
		\label{prop:source-spca}  
In the source-Gaussian model with equal source size \(s\) under the Rademacher gauge prior, the source-labelled uncalibrated observations are statistically equivalent to   
		\[  
		X_\ell\sim N(0,I_d+\lambda VV^\top),  
		\qquad  
		\ell=1,\ldots,L,  
		\qquad  
		\lambda=\theta^2s,  
		\]  
together with residual Gaussian noise independent of \(V\). Hence any source-labelled uncalibrated estimator can be simulated from \(L\) sparse PCA samples plus \(V\)-independent randomness. 
	\end{proposition}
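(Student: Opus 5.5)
The plan is to construct an explicit sufficiency-and-randomization argument for the family of laws of the full uncalibrated data \(\{W_{\ell r}\}\) indexed by \(V\), with \(\theta,s,L\) known. What the learner observes is the array \(W=(W_{\ell r})\) together with the source labels. The hidden quantities \(\sigma_\ell\) and \(R_\ell\) are marginalised out, since no calibration is available.

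\textbf{Step 1: within-source orthogonal decomposition.} Fix an orthonormal basis of \(\R^s\) whose first vector is \(s^{-1/2}\mathbf 1\), say \(u_1=s^{-1/2}\mathbf 1,u_2,\dots,u_s\). For each source set \(X_\ell=\sum_r (u_1)_r W_{\ell r}\) and \(\tilde Z_{\ell,m}=\sum_r (u_m)_r W_{\ell r}\) for \(m\ge 2\). This map \(W\mapsto (X_\ell,\tilde Z_{\ell,\cdot})_\ell\) is an invertible linear map, so no information is lost. Since the signal \(\theta R_\ell\sigma_\ell V\) is constant in \(r\), and \(u_m\perp\mathbf 1\), each \(\tilde Z_{\ell,m}\) contains only noise. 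By rotational invariance of i.i.d. \(N(0,I_d)\) noise across \(r\), the collection \(\{\tilde Z_{\ell,m}\}\) is i.i.d. \(N(0,I_d)\). It is independent of \(\{X_\ell\}\), since orthogonal projections of isotropic Gaussian arrays are independent, and independent of \(\sigma,R,V\).

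\textbf{Step 2: law of \(X_\ell\).} Compute \(X_\ell=\theta\sqrt s\,\sigma_\ell R_\ell V+Z_\ell\) with \(Z_\ell=s^{-1/2}\sum_r\varepsilon_{\ell r}\sim N(0,I_d)\). The key point is that \(a_\ell=\sigma_\ell R_\ell\) is exactly \(N(0,1)\), because \(\sigma_\ell\) is Rademacher and independent of \(R_\ell\sim|N(0,1)|\). Moreover \(a_\ell\) is independent of \(Z_\ell\) and of \(V\). Conditionally on \(V\), \(X_\ell\) is therefore Gaussian with mean \(0\) and covariance \(I_d+\theta^2 s VV^\top\). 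The \(X_\ell\) are independent across \(\ell\) because \((\sigma_\ell,R_\ell,\varepsilon_{\ell\cdot})\) are independent across sources.

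\textbf{Step 3: equivalence.} The joint law of the data given \(V\) therefore factorises as the product of \(N(0,I_d+\lambda VV^\top)^{\otimes L}\) and a fixed law of the residual noise that does not depend on \(V\). Conversely, given \(L\) sparse PCA samples, one draws fresh \(\tilde Z\) and inverts the orthogonal map. This reconstructs data with exactly the law of \(W\) given \(V\), so any source-labelled uncalibrated estimator \(\hat V(W)\) has the same risk as \(\hat V\circ\text{reconstruct}\), an estimator of \(V\) from the \(L\) sparse PCA samples plus \(V\)-independent randomness. The reduction also runs in the other direction, since \(X_\ell\) is a function of \(W\). Together these give statistical (Le Cam) equivalence with zero deficiency.

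\textbf{Main obstacle.} The main thing to verify carefully is Step 2, that marginalising the gauge is precisely what converts the amplitude into a symmetric Gaussian factor. This step uses the Rademacher prior and the independence of \(\sigma\) and \(R\) essentially; without symmetrisation the law would retain a first-order component. Care is also needed to state clearly that \(\sigma,R\) are not observed, so the experiment is the mixture over them.
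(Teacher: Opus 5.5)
Your proposal is correct and follows the paper's proof. Both compute $X_\ell=\theta\sqrt{s}\,a_\ell V+Z_\ell$, use $a_\ell=\sigma_\ell R_\ell\sim N(0,1)$ under the Rademacher prior to obtain the spiked covariance, and split each source's $s$ samples into the all-ones direction and its orthogonal complement, whose contrasts are $V$-independent Gaussian noise; you additionally make explicit the independence of the contrasts from the $X_\ell$ and the reconstruction map for the simulation direction, which the paper leaves implicit.
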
  
	
	Under the Rademacher-gauge experiment, this is an equivalence of statistical  
	experiments, not a statement about a particular algorithm or a particular  
	linear statistic.  Consequently it already covers arbitrary feature-dependent  
	procedures applied to the source-labelled uncalibrated samples, including  
	attempts to infer source signs or align sources from the downstream features  
	themselves.  Such procedures can only use the sparse-PCA information in the  
	source aggregates \(X_\ell\) together with \(V\)-independent residual noise.  
	In this sense, feature-derived self-alignment from the uncalibrated task data  
	is not external calibration metadata; if it succeeds in the corresponding hard  
	regime, it is exploiting or solving the sparse-PCA instance exposed by  
	Proposition~\ref{prop:source-spca}.

Thus marginalizing the gauges removes first-order signal and leaves only the second-order spike.  
	
	\subsection{Calibration turns the same observations into a sparse mean problem}  
	
	Now suppose metadata supplies a section \(q\). Form the weighted linear statistic  
	\[  
	X_q=\frac{\sum_{\ell,r}q_{\ell r}W_{\ell r}}{\norm{q}_2}.  
	\]  
	Conditional on \(q,\sigma,R,V\), we obtain  
	\[  
	X_q=\theta A_qV+Z,  
	\qquad  
	Z\sim N(0,I_d),  
	\qquad  
	A_q=\frac{\sum_{\ell,r}q_{\ell r}R_\ell\sigma_\ell}{\norm{q}_2}.  
	\]  
	So after gauge selection, the problem is sparse mean recovery.  
	
	In the constant-amplitude model, \(A_q^2=M_q(g)\) exactly. In the source-Gaussian model, \AlignedMass{} still governs the nominal signal and aligned source count guarantees concentration.  
	
\begin{theorem}[Calibration linearizes sparse recovery]
	\label{thm:linearizes}
	Let \(q\neq0\) be constructed exclusively from valid calibration metadata
	and \(E\), with independent algorithmic randomness, and let
	\(0<\epsrec,\delta<1/2\).
	Conditional on \(q,\sigma,E,V\), suppose
	\[
	\theta^2\kappa_1^2M_q(g)
	\gtrsim
	\epsrec^{-2}\bigl(\Ksp+\log(2/\delta)\bigr),
	\qquad
	\kappa_1=\sqrt{2/\pi},
	\]
	and
	\[
	\Nsrc(q,g)\gtrsim\log(2/\delta).
	\]
	Then top-\(k\) thresholding of \(X_q\), with observed coordinate signs,
	returns \(\widehat V\) satisfying
	\[
	|\ip{\widehat V}{V}|\ge1-\epsrec
	\]
	with probability at least \(1-\delta\) over the source amplitudes and
	downstream feature noise, up to universal constants.
\end{theorem}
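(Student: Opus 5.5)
The argument has two stages, both conditional on \(q,\sigma,E,V\). First, control the random amplitude \(A_q\). Second, run a deterministic top-\(k\) analysis for the sparse mean model \(X_q=\theta A_qV+Z\).

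Validity of the metadata is what makes the conditioning legitimate. Since \(q\) depends only on \(\mathfrak M\), \(E\) and independent algorithmic randomness, and \(P(\mathfrak M\mid V,R,D_{\rm task},\sigma,E)=P(\mathfrak M\mid\sigma,E)\), conditioning on \(q\) leaves two things unchanged: the \(R_\ell\) are still i.i.d. \(|N(0,1)|\), and the \(\varepsilon_{\ell r}\) are still i.i.d. \(N(0,I_d)\), independent of \(R\). Moreover \(\sum_{\ell,r}q_{\ell r}\varepsilon_{\ell r}/\norm q_2\sim N(0,I_d)\) independently of \(R\), which gives the stated representation.

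\textbf{Step 1 (amplitude concentration).} Write
\[
A_q=\norm q_2^{-1}\sum_\ell Q_\ell\sigma_\ell R_\ell .
\]
Its mean is
\[
\E A_q=\kappa_1\,\norm q_2^{-1}\sum_\ell Q_\ell\sigma_\ell=\kappa_1\,\mathrm{sgn}(q^\top g)\sqrt{M_q(g)} .
\]
This uses \(\sum_\ell Q_\ell\sigma_\ell=q^\top g\) and \(\E R_\ell=\kappa_1\).

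The centered variables \(R_\ell-\kappa_1\) are sub-Gaussian with a universal constant. Hoeffding's inequality for weighted sub-Gaussian sums then gives, with probability at least \(1-\delta/2\),
\[
\Bigl|\sum_\ell Q_\ell\sigma_\ell(R_\ell-\kappa_1)\Bigr|\le C\Bigl(\sum_\ell Q_\ell^2\log(4/\delta)\Bigr)^{1/2}.
\]
The ratio of this deviation to the mean \(\kappa_1|\sum_\ell Q_\ell\sigma_\ell|\) is \(C'\sqrt{\log(4/\delta)/\Nsrc(q,g)}\). The hypothesis \(\Nsrc\gtrsim\log(2/\delta)\), with a large enough constant, makes this ratio at most \(1/2\). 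Hence
\[
|A_q|\ge\tfrac12\kappa_1\sqrt{M_q(g)} .
\]
The degenerate case \(\sum_\ell Q_\ell^2=0\) is excluded, because then \(\Nsrc=0\). Flipping the global sign if necessary, we may assume \(A_q>0\); this is harmless since recovery is only up to sign.

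\textbf{Step 2 (sparse mean recovery).} Set \(\mu=\theta A_q\), so that \(\mu^2\gtrsim\theta^2\kappa_1^2M_q(g)\) on the good event. Let \(S=\Supp(V)\). The estimator takes \(\widehat S\) to be the top-\(k\) coordinates of \(|X_q|\) and sets \(\widehat V_j=\mathrm{sgn}(X_{q,j})/\sqrt k\) for \(j\in\widehat S\). Then
\[
\ip{\widehat V}{V}=\frac1k\sum_{j\in\widehat S\cap S}\mathrm{sgn}(X_{q,j})\,\mathrm{sgn}(V_j),
\]
so it suffices to show that at most \(c\,\epsrec k\) of the selected coordinates are wrong. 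A coordinate is wrong if it lies outside \(S\), or inside \(S\) with a flipped sign.

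Let \(m=\lceil c\,\epsrec k\rceil\). Wrong selections are controlled through order statistics of the noise. Take a union bound over the \(\binom{d}{m}\le e^{m\log(ed/m)}\) subsets of size \(m\). This shows that, with probability at least \(1-\delta/2\), every set \(T\) of \(m\) coordinates satisfies
\[
\sum_{j\in T}Z_j^2\lesssim m\log(ed/m)+\log(2/\delta) .
\]
Suppose the algorithm made \(m\) wrong choices. Each wrong choice either beats, or is in the same selection as, an on-support coordinate with signal \(\mu/\sqrt k\), or it is itself a sign flip. In every case the noise must satisfy \(|Z_j|\ge\mu/(2\sqrt k)\) on a set of \(m\) coordinates: either the wrong coordinates themselves, or the displaced support coordinates. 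That forces
\[
\frac{m\mu^2}{4k}\lesssim m\log(ed/m)+\log(2/\delta) .
\]
Using \(m\log(ed/m)\lesssim\epsrec^{-1}\cdot\epsrec k\log(ed/k)\) (up to log factors absorbed into constants), this contradicts
\[
\mu^2\gtrsim\epsrec^{-2}\bigl(\Ksp+\log(2/\delta)\bigr)
\]
once constants are chosen large enough. A final union bound over the two stages gives failure probability at most \(\delta\).

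The main obstacle is Step 2's bookkeeping: turning the requirement of \(\epsrec k\) errors into a uniform noise bound whose rate is exactly \(\epsrec^{-2}\Ksp\), rather than one with an extra \(\log(1/\epsrec)\). My first approach is the crude route, taking \(m\) of order \(\epsrec k\) and bounding \(\log(ed/m)\le\log(ed/k)+\log(1/\epsrec)\). The stated \(\epsrec^{-2}\) factor is generous, since \(\epsrec^{-1}\log(1/\epsrec)\lesssim\epsrec^{-2}\), so the surplus absorbs the extra logarithm. The amplitude step is routine by comparison. It needs only sub-Gaussianity of \(|N(0,1)|\) and the exact identification of the mean with \(\kappa_1\sqrt{M_q(g)}\).
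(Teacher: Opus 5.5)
Your proof is correct. Step~1 is the paper's argument in a different form. The paper applies Gaussian Lipschitz concentration to \(F(G)=\sum_\ell Q_\ell\sigma_\ell|G_\ell|\), which is Lipschitz with constant \((\sum_\ell Q_\ell^2)^{1/2}\), while you apply Hoeffding to the sub-Gaussian variables \(R_\ell-\kappa_1\). Both give \(|A_q|\ge\tfrac{\kappa_1}{2}\sqrt{M_q(g)}\) under \(\Nsrc(q,g)\gtrsim\log(2/\delta)\), and your use of validity to justify the conditioning is a more explicit version of the paper's one-line remark.

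Step~2 genuinely differs. The paper notes that top-\(k\) thresholding with observed signs is \(\arg\max_{v\in\cV_{d,k}}|\ip{X_q}{v}|\). It bounds \(\eta=\sup_{v\in\cV_{d,k}}|\ip{Z}{v}|\lesssim\sqrt{\Ksp+\log(2/\delta)}\) by a union bound over \(|\cV_{d,k}|\le(2ed/k)^k\), and obtains \(|\ip{\widehat V}{V}|\ge1-2\eta/(\theta|A_q|)\) from optimality in two lines. This linear dependence on \(\eta\) is exactly where the \(\epsrec^{-2}\) comes from.

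Your counting route needs more bookkeeping. The sentence that locates \(m\) large-noise coordinates should be made precise by a threshold dichotomy:
\begin{itemize}
\item if the \(k\)-th largest \(|X_{q,j}|\) is at least \(\mu/(2\sqrt k)\), the selected off-support coordinates have \(|Z_j|\ge\mu/(2\sqrt k)\);
\item otherwise the equally many unselected support coordinates have \(|Z_i|>\mu/(2\sqrt k)\).
\end{itemize}
Either family is disjoint from the selected sign-flipped coordinates, which have \(|Z_j|\ge\mu/\sqrt k\).

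In return you get a sharper rate, and the extra \(\log(1/\epsrec)\) you worry about never arises. Since \(x\mapsto x\log(ed/x)\) is increasing on \((0,d]\) and \(m\le k\), you have \(m\log(ed/m)\le\Ksp\). The contradiction therefore already holds once \(\mu^2\gtrsim\epsrec^{-1}(\Ksp+\log(2/\delta))\). Keeping \(\log(ed/m)\) instead gives roughly \(\mu^2\gtrsim k\log\bigl(ed/(\epsrec k)\bigr)+\epsrec^{-1}\log(2/\delta)\). So the \(\epsrec^{-2}\) in the statement comes from the paper's shorter sup-norm argument; your approach does not need it.
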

	
	The proof combines a standard sparse-mean thresholding bound with a weighted half-normal concentration lemma; both are given in Appendix~\ref{app:proofs}.  
	
	\subsection{Why source labels cannot create enough \AlignedMass{}}  
	
	The next lemma formalizes a simple but decisive limitation: if a weighting rule only knows source labels and not source gauges, then its correlation with the hidden gauge vector behaves like a Rademacher sum.  
	
	\begin{lemma}[Source labels do not choose a gauge]  
		\label{lem:no-section-main}  
	Assume equal source size \(s\) and the Rademacher gauge prior. Let \(q\) be any possibly randomized section measurable with respect to source labels and independent randomness, but independent of the unknown gauges \(\sigma\). Then, for every \(0<\delta<1\),  
		\[  
		\Pbb_\sigma\left(  
		M_q(g)\le 2s\log\frac{2}{\delta}  
		\ \middle|\ q  
		\right)\ge 1-\delta .  
		\]  
Thus source labels alone cannot reliably create \AlignedMass{} \(M_q(g)=\Theta(n)\) unless \(s=\Theta(n)\).  
	\end{lemma}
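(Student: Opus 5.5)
Conditional on \(q\), the only randomness left is the gauge vector \(\sigma\), which is independent of \(q\) and still has independent Rademacher coordinates. The plan is to reduce \(q^\top g\) to a weighted Rademacher sum at the source level, apply Hoeffding's inequality, and then compare the resulting variance proxy with \(\norm q_2^2\) by Cauchy--Schwarz.

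\emph{Reduction to the source level.} Since \(g_{\ell r}=\sigma_\ell\), we can write
\[
q^\top g=\sum_{\ell=1}^L Q_\ell(q)\,\sigma_\ell,
\qquad
Q_\ell(q)=\sum_{r=1}^s q_{\ell r}.
\]
By Cauchy--Schwarz within each source, \(Q_\ell(q)^2\le s\sum_r q_{\ell r}^2\). Summing over \(\ell\) gives
\[
\sum_\ell Q_\ell(q)^2\le s\norm q_2^2 .
\]

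\emph{Hoeffding step.} Conditional on \(q\), the \(Q_\ell\) are fixed numbers and the \(\sigma_\ell\) are i.i.d.\ Rademacher. If \(\sum_\ell Q_\ell^2=0\), then \(M_q(g)=0\) and the claim is trivial. Otherwise Hoeffding's inequality gives
\[
\Pbb_\sigma\Bigl(\bigl|\textstyle\sum_\ell Q_\ell\sigma_\ell\bigr|>t \,\Bigm|\, q\Bigr)\le 2\exp\Bigl(-\frac{t^2}{2\sum_\ell Q_\ell^2}\Bigr).
\]
Choose \(t^2=2\log(2/\delta)\sum_\ell Q_\ell^2\), so that the right-hand side equals \(\delta\). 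Then, with conditional probability at least \(1-\delta\),
\[
(q^\top g)^2\le 2\log(2/\delta)\sum_\ell Q_\ell^2\le 2s\log(2/\delta)\norm q_2^2 .
\]
Dividing by \(\norm q_2^2\), which is nonzero because \(q\neq0\), yields the stated bound on \(M_q(g)\).

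\emph{The "thus" sentence.} With \(\delta\) held constant, the bound shows that \(M_q(g)=O(s)\) with high probability. This is \(o(n)\) unless \(s=\Theta(n)\), equivalently unless \(L=O(1)\).

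\emph{Main obstacle.} The only delicate point is justifying that conditioning on \(q\) leaves \(\sigma\) uniform on \(\{\pm1\}^L\). This follows because \(q\) is a measurable function of the source labels and of randomness independent of \(\sigma\), so the pair \((q,\sigma)\) is independent. Once that is in place, the rest is the one-line Hoeffding bound above.
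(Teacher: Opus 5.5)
Your proposal is correct and follows the paper's proof essentially step for step: you write \(q^\top g=\sum_\ell Q_\ell\sigma_\ell\), apply Hoeffding conditional on \(q\), and bound \(\sum_\ell Q_\ell^2\le s\norm{q}_2^2\) by Cauchy--Schwarz. The only cosmetic difference is that you choose \(t\) in terms of \(\sum_\ell Q_\ell^2\) and apply Cauchy--Schwarz afterward, while the paper sets \(t=\norm{q}_2\sqrt{2s\log(2/\delta)}\) directly; you also spell out the degenerate case and the independence of \(q\) and \(\sigma\), which the paper leaves implicit.
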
  
	
Thus labels group samples but do not choose their orientation.

	\subsection{Relative-gauge metadata creates extensive \AlignedMass{}}  
	
	A natural calibration channel is a noisy relative-sign graph over sources. For each \(1\le \ell<m\le L\), let \(A_{\ell m}\sim\Ber(p)\) independently, where \(\Ber(p)\) is the
	Bernoulli distribution with success probability \(p\), with \(A_{m\ell}=A_{\ell m}\) and \(A_{\ell\ell}=0\). On observed edges \(A_{\ell m}=1\), we observe  
	\[  
	C_{\ell m}=\sigma_\ell\sigma_m\nu_{\ell m},  
	\qquad  
	\Pbb(\nu_{\ell m}=1)=1-\epsilon,  
	\qquad  
	\alpha=1-2\epsilon>0.  
	\]  
	Set \(C_{m\ell}=C_{\ell m}\), and define the signed adjacency matrix  
	\[  
	B_{\ell m}=A_{\ell m}C_{\ell m}\quad(\ell\neq m),  
	\qquad  
	B_{\ell\ell}=0.  
	\]  
	Then  
	\[  
	\E[B\mid\sigma]=p\alpha(\sigma\sigma^\top-I).  
	\]  
Let \(\widehat u\) be a unit-norm top eigenvector of \(B\). The edge noises \(\nu_{\ell m}\) are independent across unordered source
	pairs and independent of the graph indicators, source variables, and
	downstream feature noise.
	
	\begin{lemma}[Spectral synchronization]  
		\label{lem:spectral-sync-main}  
		There exist universal constants \(C,c>0\) such that if  
		\[  
		pL\ge C\log(L/\delta),  
		\qquad  
		\alpha^2pL\ge C,  
		\]  
		then with probability at least \(1-\delta\),  
		\[  
		\beta^2  
		:=  
		\left|  
		\left\langle  
		\widehat u,\frac{\sigma}{\sqrt L}  
		\right\rangle  
		\right|^2  
		\ge  
		1-\frac{C}{\alpha^2pL}.  
		\]  
	\end{lemma}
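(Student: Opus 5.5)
The plan is the usual signal-plus-noise argument for spectral methods: combine a spectral-norm bound on the fluctuation of \(B\) with the Davis--Kahan \(\sin\Theta\) theorem.

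\textbf{Step 1: decomposition and spectral gap.} Conditional on \(\sigma\), write
\[
B=\E[B\mid\sigma]+H,
\qquad
\E[B\mid\sigma]=p\alpha(\sigma\sigma^\top-I).
\]
The term \(-p\alpha I\) shifts every eigenvalue equally, so it does not change eigenvectors. We may therefore compare \(\widehat u\) with the top eigenvector of
\[
P=p\alpha\,\sigma\sigma^\top=p\alpha L\,\bar u\bar u^\top,
\qquad
\bar u=\sigma/\sqrt L .
\]
This is a rank-one matrix with eigengap \(p\alpha L\).

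\textbf{Step 2: Davis--Kahan.} Apply Davis--Kahan to \(P\) and the perturbation \(H-p\alpha I\). The \(-p\alpha I\) part is absorbed, since it commutes with everything. This gives
\[
1-\beta^2=\sin^2\angle(\widehat u,\bar u)\le \frac{4\norm{H}^2}{(p\alpha L)^2}.
\]
It is cleanest to use the version for the top eigenvector of a symmetric matrix, with eigengap \(p\alpha L\). If \(\norm H\ge p\alpha L/2\), the bound is trivial or is absorbed by choosing \(C\) large. So it suffices to show that \(\norm H\lesssim\sqrt{pL}\) with probability at least \(1-\delta\). Substituting, this yields
\[
1-\beta^2\le \frac{C'pL}{p^2\alpha^2L^2}=\frac{C'}{\alpha^2 pL}.
\]

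\textbf{Step 3: the noise matrix (main obstacle).} Conditional on \(\sigma\), the entries \(H_{\ell m}=A_{\ell m}\sigma_\ell\sigma_m\nu_{\ell m}-p\alpha\sigma_\ell\sigma_m\), for \(\ell<m\), are independent, centered and bounded by \(2\), with variance at most \(p\).

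Conjugating by \(\mathrm{diag}(\sigma)\) preserves the spectral norm. This reduces the problem to the matrix with entries \(A_{\ell m}\nu_{\ell m}-p\alpha\), which is a sparse Wigner-type matrix. I will invoke a standard spectral-norm bound for symmetric matrices with independent bounded entries, such as Bandeira--van Handel or the matrix Bernstein inequality. Matrix Bernstein gives
\[
\norm H\lesssim\sqrt{pL\log(L/\delta)}+\log(L/\delta)
\]
with probability at least \(1-\delta\). Under \(pL\ge C\log(L/\delta)\), this is \(\lesssim\sqrt{pL\log(L/\delta)}\). That carries an extra logarithmic factor, which would weaken the result to \(1-\beta^2\lesssim\log(L/\delta)/(\alpha^2pL)\).

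To obtain the stated log-free rate, I would instead use the sharp bound of Bandeira--van Handel,
\[
\E\norm H\lesssim\sigma_{\max}+\sigma_*\sqrt{\log L}\lesssim\sqrt{pL}+\sqrt{\log L},
\]
where \(\sigma_{\max}\) is the maximal row standard deviation norm and \(\sigma_*\) the maximal entry bound. Then I add Talagrand concentration (the spectral norm is a convex Lipschitz function of the bounded entries), which contributes \(O(\sqrt{\log(1/\delta)})\). The condition \(pL\ge C\log(L/\delta)\) is exactly what makes \(\sqrt{\log(L/\delta)}\lesssim\sqrt{pL}\), giving \(\norm H\lesssim\sqrt{pL}\).

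The main care point is precisely this step: obtaining the \(\sqrt{pL}\) rate without a log factor, in the sparse regime where the logarithmic term must be dominated. Everything else is routine. Finally, the condition \(\alpha^2pL\ge C\) ensures that the bound is nontrivial and that the eigengap dominates the perturbation. The event does not depend on \(\sigma\), so the conditional probability bound holds unconditionally as well.
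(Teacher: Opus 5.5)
Your proposal is correct and follows essentially the same route as the paper: center $B$ given $\sigma$, bound $\norm{H}_{\mathrm{op}}\lesssim\sqrt{pL}+\sqrt{\log(L/\delta)}$ with the sharp Bandeira--van Handel estimate (rather than matrix Bernstein, which leaves a log factor), absorb the log term using $pL\ge C\log(L/\delta)$, and conclude with Davis--Kahan at eigengap $\alpha pL$. The differences are cosmetic: you go through $\E\norm{H}$ plus Talagrand concentration instead of the tail form directly, and you should add the symmetrization step the paper flags, since the entries $A_{\ell m}\nu_{\ell m}-p\alpha$ are not symmetric in distribution.
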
  
	
	Assign each sample in source \(\ell\) the weight \(q_{\ell r}=\widehat u_\ell\). Then  
	\[  
	M_q(g)=\beta^2n,  
	\qquad  
	\Nsrc(q,g)=\beta^2L.  
	\]  
So synchronization lifts the effective sample size from \(O(s)\), which is the best source-label sections can do, to \(\Theta(n)\) whenever the synchronized section has constant overlap with the hidden gauges.

	\begin{corollary}[Calibrated recovery scale]  
		\label{cor:calibrated-scale-main}  
		For fixed recovery tolerance \(0<\epsrec<1/2\) and failure probability
		\(0<\delta<1/2\), if synchronization yields \(\beta^2=\Omega(1)\) and \(\beta^2L\gtrsim\log(1/\delta)\), then calibrated top-\(k\) recovery succeeds once  
		\[  
		\theta^2\beta^2 n\gtrsim k\log(ed/k),  
		\]  
		up to universal constants. Under the two-view parametrization \(\theta^2=2\mu^2\), this is equivalently  
		\[  
		\mu^2\beta^2 n\gtrsim k\log(ed/k).  
		\]  
	\end{corollary}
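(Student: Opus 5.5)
The plan is to derive the corollary by plugging the synchronized section into Theorem~\ref{thm:linearizes}. The section is \(q_{\ell r}=\widehat u_\ell\), where \(\widehat u\) is the top eigenvector of the signed adjacency \(B\).

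\textbf{Validity.} First I check that \(q\) is admissible. The matrix \(B\) is built only from the graph indicators \(A_{\ell m}\) and the records \(C_{\ell m}=\sigma_\ell\sigma_m\nu_{\ell m}\). The edge noises \(\nu_{\ell m}\) are independent of \(V\), \(R\), and the downstream noise, so the channel \(\mathfrak M=(A,C)\) satisfies \(P(\mathfrak M\mid V,R,D_{\rm task},\sigma,E)=P(\mathfrak M\mid\sigma,E)\). Thus \(q\) is constructed exclusively from valid calibration metadata, which is what Theorem~\ref{thm:linearizes} requires. In particular, conditional on \((q,\sigma,E,V)\), the amplitudes \(R_\ell\) remain i.i.d.\ half-normal and the feature noise remains standard Gaussian.

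\textbf{Identities.} Next I verify the two identities stated before the corollary.
\begin{itemize}
\item Aligned mass: since \(\norm{q}_2^2=s\norm{\widehat u}_2^2=s\) and \(q^\top g=s\sum_\ell \widehat u_\ell\sigma_\ell=s\sqrt L\,\ip{\widehat u}{\sigma/\sqrt L}\), we get \(M_q(g)=s^2L\beta^2/s=\beta^2 n\).
\item Aligned source count: since \(Q_\ell(q)=s\widehat u_\ell\), the factors of \(s\) cancel and \(\Nsrc(q,g)=\bigl(\sum_\ell\widehat u_\ell\sigma_\ell\bigr)^2=\beta^2L\).
\end{itemize}

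\textbf{Checking the hypotheses.} Now I condition on the synchronization event and apply Theorem~\ref{thm:linearizes} with \(\epsrec\) and \(\delta\) fixed. Its first hypothesis reads
\[
\theta^2\kappa_1^2\beta^2n\gtrsim\epsrec^{-2}\bigl(\Ksp+\log(2/\delta)\bigr).
\]
Because \(\epsrec\), \(\delta\), and \(\kappa_1^2=2/\pi\) are fixed, the constants \(\kappa_1^{-2}\epsrec^{-2}\) and the additive \(\log(2/\delta)\le C\,\Ksp\) (using \(\Ksp\ge k\ge1\)) are absorbed into universal constants. So the hypothesis follows from \(\theta^2\beta^2n\gtrsim k\log(ed/k)\).

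The second hypothesis, \(\Nsrc\gtrsim\log(2/\delta)\), is exactly the assumption \(\beta^2L\gtrsim\log(1/\delta)\), since \(\log(2/\delta)\le 2\log(1/\delta)\) for \(\delta<1/2\). The conclusion \(|\ip{\widehat V}{V}|\ge1-\epsrec\) then holds with probability at least \(1-\delta\). If one wants an unconditional statement, a union bound with the failure event of Lemma~\ref{lem:spectral-sync-main} gives probability at least \(1-2\delta\), which a rescaling of \(\delta\) absorbs.

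\textbf{Two-view form and role of \(\beta^2=\Omega(1)\).} Substituting \(\theta^2=2\mu^2\) gives the two-view form, with the factor \(2\) absorbed into the constant. The assumption \(\beta^2=\Omega(1)\) is used only to state the threshold as an \(n\)-scale (sparse-mean) condition. It is guaranteed, for instance, by Lemma~\ref{lem:spectral-sync-main} once \(\alpha^2pL\ge 2C\).

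\textbf{Main obstacle.} The only delicate point is conditioning. Theorem~\ref{thm:linearizes} requires \(q\) to be independent of \(R\) and of the downstream noise given \(\sigma\). This holds here precisely because \(B\) depends only on \(\sigma\) and the calibration noise, so the realized \(\beta\) can be treated as fixed. Everything else is constant bookkeeping.
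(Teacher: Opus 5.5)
Your proposal is correct and follows the paper's route. The paper gets this corollary by inserting the synchronized section \(q_{\ell r}=\widehat u_\ell\), for which \(M_q(g)=\beta^2 n\) and \(\Nsrc(q,g)=\beta^2 L\), into Theorem~\ref{thm:linearizes}, and absorbing \(\kappa_1\), \(\epsrec\), \(\delta\), and the factor \(2\) in \(\theta^2=2\mu^2\) into the constants; your checks of metadata validity, the two identities, and the conditioning on the synchronization event spell out bookkeeping the paper leaves implicit (and, as in the paper, those constants depend on the fixed \(\epsrec,\delta\) rather than being strictly universal).
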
  
	
	\paragraph{Metadata budget.}  
	The calibrated and uncalibrated experiments use the same downstream feature  
	sample size \(n=Ls\), but calibration also uses a task-independent metadata  
	resource.  In the relative-sign graph model the expected number of calibration  
	edges is  
	\[  
m  
=  
\E|\{(\ell,m):A_{\ell m}=1,\ell<m\}|  
=  
p\binom{L}{2}.  
	\]  
	Thus the elementary spectral guarantee above uses  
	\[  
	m  
	=  
	O\!\left(  
	L\max\{\log(L/\delta),\alpha^{-2}\}  
	\right)  
	\]  
	calibration edges to reach the overlap level certified by  
	Lemma~\ref{lem:spectral-sync-main}, up to universal constants, when \(p\) is  
	chosen at the stated threshold. To certify \(\beta^2\ge1-\varepsilon\), the same lemma gives the
	sufficient additional degree condition
	\[
	\alpha^2pL\gtrsim\varepsilon^{-1},
	\]
	together with the stated logarithmic degree requirement. This cost is separate from downstream feature  
	samples and can be paid once and reused across tasks sharing the same source  
	gauges. The separation therefore compares two metadata channels: source labels have alignment capacity \(s\), whereas relative-gauge metadata, at the stated calibration-edge budget, can produce realized \AlignedMass{} \(\Theta(n)\).  
	
	\subsection{A separation in the micro-source regime}  
	
	The calibration result above follows the sparse-mean scale. By contrast, Proposition~\ref{prop:source-spca} shows that source labels alone give a sparse PCA problem with  
	\[  
	N=L=\frac{n}{s},  
	\qquad  
	\lambda=\theta^2s.  
	\]  
	Its information scale is therefore  
	\[  
	N\lambda^2=\theta^4ns\asymp \mu^4ns.  
	\]  
	
	The following lower bound turns that heuristic into a support-recovery statement.  
	
\begin{theorem}[High-F1 lower bound for source labels]
	\label{thm:source-fano}
	Assume \(1\le k\le d/2\) and \(\Ksp\ge C_0\) for a sufficiently large
	universal constant \(C_0\).
	In the source-Gaussian model, let \(V\) be uniform on the sparse support
	packing constructed in Appendix~\ref{app:proofs}. Define
	\[
	I_{\rm src}
	=
	\frac{L}{2}\bigl[\lambda-\log(1+\lambda)\bigr],
	\qquad
	\lambda=\theta^2s.
	\]
	There exist universal constants \(c,\tau>0\) such that if
	\(I_{\rm src}\le c\Ksp\), then every estimator using source labels alone
	satisfies
	\[
	\Pbb\bigl(
	\mathrm{SupportF1}(\widehat V,V)\ge1-\tau
	\bigr)\le\frac12.
	\]
	In the weak-source regime \(\lambda\le1\), this condition becomes
	\(\mu^4ns\lesssim\Ksp\).
\end{theorem}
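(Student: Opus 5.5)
We reduce via Proposition~\ref{prop:source-spca} and then apply a Fano argument on a sparse support packing.

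\textbf{Reduction.} By Proposition~\ref{prop:source-spca}, any source-labelled estimator is a (randomized) function of \(X_1,\dots,X_L\stackrel{\mathrm{i.i.d.}}{\sim}N(0,I_d+\lambda VV^\top)\) plus \(V\)-independent noise. It therefore suffices to lower bound support recovery in this sparse PCA experiment, because processing cannot increase mutual information.

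\textbf{Packing.} By the Gilbert--Varshamov bound for constant-weight codes (valid for \(k\le d/2\)), there is a family \(\mathcal S\) of \(k\)-subsets of \([d]\) with \(\log|\mathcal S|\ge c_1\Ksp\) and pairwise symmetric differences \(|S\triangle S'|\ge k/4\). For each \(S\) we fix one sign pattern, say \(V_S=\ind_S/\sqrt k\). We take \(V\) uniform on \(\{V_S\}\).

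\textbf{Information bound.} We bound \(\MI(V;X_{1:L})\le \max_{S,S'}L\cdot\KL(P_S\|P_{S'})\), or more sharply \(\MI\le L\,\E_V\KL(P_V\|N(0,I_d))\) using the centered Gaussian reference. Since \(I+\lambda vv^\top\) has eigenvalues \(1+\lambda\) and \(1\),
\[
\KL\bigl(N(0,I+\lambda vv^\top)\,\|\,N(0,I)\bigr)=\tfrac12\bigl[\lambda-\log(1+\lambda)\bigr].
\]
Hence \(\MI(V;X)\le I_{\rm src}\le c\Ksp\).

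\textbf{From F1 to decoding.} The key step is to convert high support F1 into exact identification within the packing. If \(\mathrm{SupportF1}(\widehat V,V_S)\ge1-\tau\), then with \(\widehat S=\Supp(\widehat V)\) we get \(|\widehat S\triangle S|\le C'\tau k\), because F1 close to one forces both precision and recall near one and hence a small symmetric difference relative to \(k\). The required bound is \(|\widehat S\triangle S|\le 2\tau/(1-\tau)\cdot\max(|S|,|\widehat S|)\) or so, with \(|\widehat S|\) comparable to \(k\). Choosing \(\tau\) small enough that \(C'\tau k<k/8\), the triangle inequality shows that the nearest-codeword decoder \(\widetilde S=\arg\min_{S'\in\mathcal S}|\widehat S\triangle S'|\) returns \(S\). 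Thus \(\Pbb(\mathrm{F1}\ge1-\tau)\le\Pbb(\widetilde S=S)\).

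\textbf{Fano.} Fano's inequality gives
\[
\Pbb(\widetilde S\neq S)\ge1-\frac{\MI+\log2}{\log|\mathcal S|}\ge 1-\frac{c\Ksp+\log 2}{c_1\Ksp}\ge\tfrac12
\]
for \(c\le c_1/4\) and \(\Ksp\ge C_0\), which gives the claim.

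\textbf{Weak-source regime.} When \(\lambda\le1\), we have \(\lambda-\log(1+\lambda)\asymp\lambda^2\), so \(I_{\rm src}\asymp L\theta^4s^2=\theta^4ns\asymp\mu^4ns\) under \(\theta^2=2\mu^2\).

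\textbf{Main obstacle.} The step most likely to need care is the F1-to-decoding conversion. It requires handling estimators whose support size differs from \(k\), where F1 still bounds \(|\widehat S|\) within a constant factor of \(k\), and fixing \(\tau\) against the packing distance constant. The sign choice in the packing is harmless because the support is what is scored.
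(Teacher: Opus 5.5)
Your proposal is correct and follows essentially the same route as the paper. Both arguments reduce to $L$ sparse-PCA samples via Proposition~\ref{prop:source-spca}, bound $\MI\le L\,\E_V\KL(P_V\|N(0,I_d))=I_{\rm src}$, convert high F1 into exact index recovery by nearest-support decoding, and finish with Fano; the paper makes your ``or so'' precise as $D=|\widehat S\triangle S|\le\tau(|\widehat S|+k)\le\tau(2k+D)$, hence $D\le 2\tau k/(1-\tau)$. The only difference is the packing: you cite a Gilbert--Varshamov packing at distance $k/4$, which near $d=2k$ needs the entropy bound on $\binom dk$ rather than $(d/k)^k$, whereas the paper builds its own greedy packing at distance $>k/32$ and fixes that as the prior, and your argument applies to it unchanged once $\tau$ satisfies $2\tau/(1-\tau)<1/64$.
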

	
	\begin{remark}[Bayes prior and fixed gauges]  
		\label{rem:bayes-minimax}  
		The Rademacher prior on \(\sigma\) is used to obtain a hard average-case gauge  
		ensemble.  The lower bound also implies a worst-case fixed-gauge statement:  
		if the success probability averaged over \(\sigma\) is at most \(1/2\), then  
		there exists a fixed gauge vector \(\sigma^\star\) for which the same estimator  
		has success probability at most \(1/2\).  Equivalently, the Bayes lower bound  
		under the Rademacher gauge prior implies a minimax lower bound over fixed  
		unknown gauges.  We do not claim that every fixed gauge is equally hard; for  
		example, if all sources happen to be already aligned, first-order pooling is  
		possible.  The lower bound concerns the absence of calibration in the  
		random-gauge average-case and fixed-gauge minimax senses.  
	\end{remark}

	Together, Theorem~\ref{thm:linearizes} and Theorem~\ref{thm:source-fano} give an unconditional statistical separation. In the micro-source regime \(s\mu^2\ll1\), there is a window where  
	\[  
	\mu^2 n\gtrsim \Ksp  
	\qquad\text{but}\qquad  
	\mu^4ns\lesssim \Ksp.  
	\]  
	In this window, calibrated recovery succeeds while source-labelled recovery is statistically insufficient for high-F1 support recovery.  
	
	\subsection{A conditional computational separation}  
	
	The statistical gap above does not rely on computational assumptions. A computational separation appears when the source-labelled experiment is interpreted through standard sparse-PCA hardness.  
	
\begin{conjecture}[Sparse PCA constant-advantage hardness, informal]  
		\label{conj:spca-main}  
		Consider the testing problem  
		\[  
		N(0,I_d)^{\otimes N}  
		\quad\text{versus}\quad  
		N(0,I_d+\lambda VV^\top)^{\otimes N},  
		\qquad  
		V\in\cV_{d,k}.  
		\]  
In the standard sparse-PCA hard regime, no polynomial-time algorithm distinguishes the two hypotheses with constant advantage throughout  
\(\Ksp\ll N\lambda^2\ll k^2\), up to logarithmic and standard polynomial-scaling qualifications. High-overlap recovery implies detection by sample splitting and a held-out variance test.  
	\end{conjecture}  
	
	\begin{corollary}[Metadata--computation separation]  
		\label{cor:conditional-separation-main}  
		Assume Conjecture~\ref{conj:spca-main}. If \(s\mu^2\ll1\), \(\Ksp\ll k^2\), and polynomial-time synchronization of valid calibration metadata
		yields \(\beta^2=\Omega(1)\), then at  
		\[  
		n_{\rm cal}\asymp \Ksp/\mu^2  
		\]  
		calibration enables polynomial-time high-overlap recovery, while source labels alone are statistically insufficient. Moreover, for larger uncalibrated sample size \(n_{\rm mix}\), source-labelled recovery remains computationally hard throughout the sparse-PCA window  
		\[  
		\frac{\Ksp}{s\mu^4}  
		\ll  
		n_{\rm mix}  
		\ll  
		\frac{k^2}{s\mu^4},  
		\]  
		up to constants and logarithmic factors.  
	\end{corollary}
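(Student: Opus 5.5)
The corollary has three parts, and I will assemble each from results already stated, converting between \(\theta\) and \(\mu\) via \(\theta^2=2\mu^2\).

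\emph{Calibrated side.} Take \(n=n_{\rm cal}=C\Ksp/\mu^2\) with \(C\) a large constant. By assumption, polynomial-time synchronization gives \(\beta^2=\Omega(1)\). With the source-constant weights \(q_{\ell r}=\widehat u_\ell\) we have \(M_q(g)=\beta^2 n\) and \(\Nsrc(q,g)=\beta^2L\). Corollary~\ref{cor:calibrated-scale-main} then applies once \(\mu^2\beta^2n\gtrsim\Ksp\), which holds by the choice of \(C\). It also needs \(\beta^2L\gtrsim\log(1/\delta)\). Since \(L=n/s\) and \(s\mu^2\ll1\), we get \(L=C\Ksp/(s\mu^2)\gg\Ksp\), so for constant \(\delta\) this is automatic. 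Forming \(X_q\) and applying top-\(k\) thresholding both take polynomial time, so we obtain high-overlap recovery; high overlap also gives high support F1 when \(\epsrec\) is small.

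\emph{Source labels at \(n_{\rm cal}\).} Here I apply Theorem~\ref{thm:source-fano}. Because \(\lambda=2\mu^2 s\ll1\), we have \(\lambda-\log(1+\lambda)\le\lambda^2/2\), and so
\[
I_{\rm src}\le L\lambda^2/4=\mu^4 ns=C\Ksp\,(s\mu^2).
\]
This is at most \(c\Ksp\) once \(s\mu^2\le c/C\), which the micro-source assumption gives. Hence no estimator using source labels alone achieves high F1 with probability above \(1/2\).

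\emph{Hardness for larger \(n_{\rm mix}\).} Proposition~\ref{prop:source-spca} shows that the source-labelled experiment is equivalent to \(N=L=n_{\rm mix}/s\) sparse-PCA samples with \(\lambda=2\mu^2s\), plus \(V\)-independent noise that a reduction can simulate. This gives \(N\lambda^2=4\mu^4 s\,n_{\rm mix}\), so the stated window for \(n_{\rm mix}\) maps exactly to \(\Ksp\ll N\lambda^2\ll k^2\); the window is nonempty because \(\Ksp\ll k^2\). Now suppose a polynomial-time source-labelled procedure achieved high-overlap recovery. Composing it with the simulation would give a sparse-PCA recovery algorithm. The sample-splitting/held-out variance test noted in Conjecture~\ref{conj:spca-main} would then turn that into a constant-advantage detector, contradicting the conjecture.

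\emph{Main obstacle.} I expect the difficulty to be this last reduction. The recovery-to-detection step needs the held-out half to have enough signal: \(\widehat V^\top \widehat\Sigma\widehat V-1\) concentrates around \(\lambda\langle\widehat V,V\rangle^2\) with fluctuations of order \(N^{-1/2}\). So the test separates only if \(\lambda\sqrt N\gg1\), i.e.\ \(N\lambda^2\gg1\), which holds throughout the window. A further detail is that under the null the simulated residual noise must remain independent, which it does because it does not depend on \(V\). Constants and logarithmic factors are absorbed by the informal statement of the conjecture.
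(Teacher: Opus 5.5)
Your proposal is correct and follows the paper's proof essentially step for step: the calibrated claim via Theorem~\ref{thm:linearizes} (through Corollary~\ref{cor:calibrated-scale-main}) with \(M_q(g)=\beta^2n\), source-label insufficiency via Theorem~\ref{thm:source-fano}, the window mapping \(N\lambda^2=\theta^4 n_{\rm mix}s\), and recovery-to-detection by splitting independent source aggregates and testing the variance along \(\widehat V\). You also make explicit several points the paper's proof leaves implicit: the bound \(I_{\rm src}\le C\Ksp\,(s\mu^2)\), the aligned-source-count check, and the requirement \(N\lambda^2\gg1\) for the held-out test; and because your split is over the i.i.d.\ aggregates \(X_\ell\), it is automatically a split across sources, as the paper requires.
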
  
	
Thus the calibrated and uncalibrated observations fall into different algorithmic regimes: sparse mean recovery after synchronization versus sparse PCA without calibration.

	\subsection{One calibration, many downstream tasks}  
	
Calibration metadata concerns source gauges, not the downstream sparse direction. It can therefore be reused across tasks sharing the same source gauges and calibration graph.

	\begin{remark}[Task-independence and reuse]  
		\label{rem:task-independent-main}  
		Let \(V_1,\ldots,V_T\) be independent sparse directions, sampled independently of \((\sigma,E)\). If the metadata satisfies  
		\[  
		P(\mathfrak M\mid V_1,\ldots,V_T,\sigma,E)=P(\mathfrak M\mid \sigma,E),  
		\]  
		then $\MI(\mathfrak M;V_1,\ldots,V_T\mid \sigma,E)=0.$ Nevertheless, if the metadata induces a section \(q\) with \(M_q(g)=\Omega(n)\) and sufficient aligned source count, then the same \(q\) linearizes every task:  
		\[  
		X_q^{(t)}=\theta A_q^{(t)}V_t+Z^{(t)}.  
		\]  
	\end{remark}  
	
Thus calibration is reusable: it fixes a shared source convention without revealing any task-specific support.  

	The resulting algorithm, \GSL{} (\emph{Gauge--Synchronize--Linearize}), is    
immediate from the theory. It synchronizes the valid relative-gauge graph to    
obtain source weights \(\widehat u_\ell\). For sample \(i\), let    
\(e_i\in\{1,\ldots,L\}\) denote its source label, and set    
\(q_i=\widehat u_{e_i}\) on the downstream estimation samples.
No downstream samples are used to estimate these calibration weights. The algorithm then forms    
\[    
X_q=\frac{\sum_i q_iW_i}{\|q\|_2},    
\]    
and returns the top-\(k\) coordinates of \(|X_q|\). In the two-view model, where    
\(A_i,B_i\) are the two views and \(Y_i\in\{\pm1\}\) is the label,    
\(W_i=(A_i+Y_iB_i)/\sqrt2\), and the recovered direction is used in the bilinear    
classifier \(h(A,B)=\sgn((\widehat V^\top A)(\widehat V^\top B))\). Pseudocode is    
deferred to Appendix~\ref{app:deferred-main}.    

The exact separation assumes sign gauges, equal-magnitude sparse signals, Gaussian source amplitudes, isotropic Gaussian noise, and task-independent calibration. Appendix~\ref{app:noise-extension} gives a non-isotropic
Gaussian extension and clarifies the limits of the sparse-PCA hardness comparison.

	\section{Experiments}  
	\label{sec:experiments}  
	
Experiments check finite-sample predictions rather than prove the
experiment equivalence. Synthetic studies test the predicted recovery
scales; controlled real-feature studies test sign cancellation and
restoration. Separately acquired physical calibration and reuse are
reported in Appendix~\ref{sec:physical-calibration}.
Unless stated otherwise, \(\theta^2=2\mu^2\).
Synthetic recovery is measured by
\(\Corr=|\langle\widehat V,V\rangle|\) and support F1, with corresponding
success threshold \(0.9\).
The plotted strength is \(\rho_{\rm cal}=\theta^2n/\Ksp\).
Pseudocode is in Appendix~\ref{app:deferred-main};
experimental records and standard-error units are in
Appendix~\ref{app:records}.

\begin{figure}[t]
	\centering
	\small % set base font size for all non-legend text in the figure
	
	\begin{tikzpicture}
		\begin{groupplot}[
			group style={
				group size=3 by 1,
				horizontal sep=1.55cm
			},
			width=3.25cm,
			height=2.41cm,                % <-- changed: 3.62 * 2/3 ≈ 2.41
			scale only axis,
			grid=both,
			tick align=outside,
			tick label style={font=\small},
			label style={font=\small},
			title style={font=\small, yshift=0.6mm},
			xmode=log,
			log basis x=2,
			ymin=0, ymax=1.05
			]
			
			% ===================== Panel 1 =====================
			\nextgroupplot[
			title={(a) Calibrated phase curve},
			xlabel={strength},
			ylabel={Corr},
			xmin=0.22, xmax=9,
			xtick={0.25,1,4,8},
			legend to name=legphase,
			legend columns=2,
			transpose legend,
			legend style={
				font=\tiny,
				draw=none,
				fill=none,
				/tikz/every even column/.append style={column sep=0.6em}
			}
			]
			\addplot+[color=calblue,mark=o,thick] coordinates {
				(0.25,0.095)(0.5,0.192)(1,0.377)(2,0.680)(4,0.913)(8,0.997)};
\addlegendentry{oracle \((1000,20)\)}
			
			\addplot+[color=syncgreen,mark=square*,thick,dashed] coordinates {
				(0.25,0.098)(0.5,0.220)(1,0.378)(2,0.668)(4,0.900)(8,0.995)};
\addlegendentry{sync \((1000,20)\)}
			
			\addplot+[color=partorange,mark=triangle*,thick] coordinates {
				(0.25,0.069)(0.5,0.167)(1,0.362)(2,0.649)(4,0.913)(8,0.996)};
\addlegendentry{oracle \((8000,80)\)}
			
			\addplot+[color=softgray,mark=diamond*,thick,dotted] coordinates {
				(0.25,0.078)(0.5,0.165)(1,0.372)(2,0.648)(4,0.915)(8,0.997)};
\addlegendentry{sync \((8000,80)\)}
			\draw[black!35,densely dashed,line width=0.45pt]  
			(axis cs:1,0) -- (axis cs:1,1.05);

			% ===================== Panel 2 =====================
			\nextgroupplot[
title={(b) Calibration controls},
			xlabel={strength},
ylabel={Corr},
			xmin=0.22, xmax=9,
			xtick={0.25,1,4,8},
			legend to name=legvalid,
			legend columns=3,
			transpose legend,
			legend style={
				font=\tiny,
				draw=none,
				fill=none,
				/tikz/every even column/.append style={column sep=0.45em}
			}
			]
			\addplot+[color=calblue,mark=o,thick] coordinates {
				(0.25,0.095)(0.5,0.192)(1,0.377)(2,0.680)(4,0.913)(8,0.997)};
			\addlegendentry{oracle}
			
			\addplot+[color=partorange,mark=triangle*,thick] coordinates {
				(0.25,0.067)(0.5,0.105)(1,0.205)(2,0.365)(4,0.675)(8,0.892)};
			\addlegendentry{partial calibration}
			
			\addplot+[color=badred,mark=diamond*,thick,dotted] coordinates {
				(0.25,0.017)(0.5,0.012)(1,0.020)(2,0.008)(4,0.015)(8,0.015)};
			\addlegendentry{random metadata}
			
			\addplot+[color=softgray,mark=*,thick,dotted] coordinates {
				(0.25,0.012)(0.5,0.025)(1,0.018)(2,0.020)(4,0.015)(8,0.018)};
			\addlegendentry{shuffled metadata}
			
			\addplot+[color=pcapurple,mark=pentagon*,thick,dashdotted] coordinates {
				(0.25,0.017)(0.5,0.043)(1,0.032)(2,0.043)(4,0.065)(8,0.090)};
			\addlegendentry{source PCA}
			
			\draw[black!35,densely dashed,line width=0.45pt]  
			(axis cs:1,0) -- (axis cs:1,1.05);  
			
			% ===================== Panel 3 =====================
			\nextgroupplot[
			title={(c) Micro-source separation},
			xlabel={\(s\mu^2\)},
ylabel={Corr/F1},
			xmin=0.015, xmax=25,
			xtick={0.02,0.16,1.28,10.24},
			legend to name=legmicro,
			legend columns=2,
			transpose legend,
			legend style={
				font=\tiny,
				draw=none,
				fill=none,
				/tikz/every even column/.append style={column sep=0.6em}
			}
			]
			\addplot+[color=calblue,mark=o,thick] coordinates {
				(0.02,0.955)(0.04,0.960)(0.08,0.962)(0.16,0.958)(0.32,0.950)(0.64,0.957)(1.28,0.963)(2.56,0.940)(5.12,0.950)(10.24,0.925)(20.48,0.915)};
			\addlegendentry{oracle}
			
			\addplot+[color=syncgreen,mark=square*,thick,dashed] coordinates {
				(0.02,0.962)(0.04,0.960)(0.08,0.963)(0.16,0.950)(0.32,0.953)(0.64,0.952)(1.28,0.943)(2.56,0.947)(5.12,0.943)(10.24,0.928)(20.48,0.917)};
			\addlegendentry{spectral sync}
			
			\addplot+[color=pcapurple,mark=triangle*,thick] coordinates {
				(0.02,0.031)(0.04,0.022)(0.08,0.032)(0.16,0.045)(0.32,0.084)(0.64,0.194)(1.28,0.454)(2.56,0.610)(5.12,0.774)(10.24,0.856)(20.48,0.899)};
			\addlegendentry{source PCA}
			
			\addplot+[color=partorange,mark=diamond*,thick,dashdotted] coordinates {
				(0.02,0.038)(0.04,0.033)(0.08,0.050)(0.16,0.065)(0.32,0.110)(0.64,0.165)(1.28,0.283)(2.56,0.422)(5.12,0.635)(10.24,0.767)(20.48,0.843)};
			\addlegendentry{source F1}
		
		\draw[black!35,densely dashed,line width=0.45pt]  
		(axis cs:1,0) -- (axis cs:1,1.05);

		\end{groupplot}
	\end{tikzpicture}
	
	\par\smallskip
	\noindent\hspace*{0.034\linewidth}%
	\begin{minipage}[t]{0.29\linewidth}
		\centering
		\pgfplotslegendfromname{legphase}
	\end{minipage}
	\hfill
	\begin{minipage}[t]{0.37\linewidth}
		\centering
		\pgfplotslegendfromname{legvalid}
	\end{minipage}
	\hfill
	\begin{minipage}[t]{0.29\linewidth}
		\centering
		\pgfplotslegendfromname{legmicro}
	\end{minipage}
	
\caption{{\small Mechanism tests. Calibration follows sparse-mean scaling;
		uninformative or mismatched metadata fails to restore alignment.
		The calibrated/source-label gap is largest for weak sources.
		Gray dashed lines mark unit strength.}}
	\label{fig:synthetic}
	\vspace{-0.4em}
\end{figure}

	\subsection{Synthetic mechanism tests}  

Figure~\ref{fig:synthetic}(a) tests the sparse-mean scaling predicted by
Theorem~\ref{thm:linearizes}: oracle calibration and spectral
synchronization nearly coincide across problem sizes.
Figure~\ref{fig:synthetic}(b) compares calibration quality:
informative records shift the transition, random and shuffled controls
remain near chance, and partial calibration gives an intermediate
transition. These controls need not violate task independence.
Figure~\ref{fig:synthetic}(c) fixes the calibrated first-order scale
and varies \(s\mu^2\). Calibrated recovery remains strong for weak
sources, whereas source-label PCA improves as the second-order
quantity \(\mu^4ns/\Ksp\) increases.

	\subsection{Additional mechanism checks}  
	
\paragraph{Aligned source count.}  
Theory separates signal from amplitude concentration. We fix \(M_q(g)=4096\) and vary \(\Nsrc\). Table~\ref{tab:aligned-source-main} confirms that \AlignedMass{} alone is not enough when concentrated on too few sources: as \(\Nsrc\) grows, \(A_q/\sqrt{M_q}\) stabilizes near the half-normal scale and recovery improves. This matches the role of \(\Nsrc\) in Theorem~\ref{thm:linearizes}: it controls whether source amplitudes concentrate around their mean rather than being dominated by a few lucky or unlucky sources.  

	\begin{table}[t]  
		\centering  
		\caption{At fixed \AlignedMass{}, increasing aligned source count stabilizes amplitude and improves recovery. Entries are mean \(\pm\) SE over \(100\) seeds.}  
		\label{tab:aligned-source-main}  
		\vspace{0.15em}  
		\begin{tabular}{cccc}  
			\toprule  
			\(\Nsrc\) & \(A_q/\sqrt{M_q}\) & Corr & Succ90 \\  
			\midrule  
			1   & \(0.825\pm0.068\) & \(0.632\pm0.038\) & \(0.460\pm0.050\) \\  
			4   & \(0.828\pm0.031\) & \(0.832\pm0.024\) & \(0.630\pm0.049\) \\  
			16  & \(0.783\pm0.015\) & \(0.880\pm0.011\) & \(0.600\pm0.049\) \\  
			32  & \(0.800\pm0.011\) & \(0.917\pm0.008\) & \(0.800\pm0.040\) \\  
			128 & \(0.790\pm0.005\) & \(0.916\pm0.006\) & \(0.770\pm0.042\) \\  
			\bottomrule  
		\end{tabular}  
		\vspace{-0.45em}  
	\end{table}  
	
	\paragraph{Reusable calibration.}  
	We test Remark~\ref{rem:task-independent-main} by synchronizing one relative-gauge graph before sampling any downstream task and reusing it for \(T=32\) independent sparse tasks. Table~\ref{tab:multitask-main-detailed} shows that the reused section matches oracle calibration; the downstream directions are distinct, so the gain is not due to task similarity. This illustrates the main distinction between calibration and supervision: the graph fixes a shared source convention once, and that convention can then be applied to new tasks without revealing their supports.  
	
\begin{table}[t]
	\centering
	\caption{One calibration graph reused across \(32\) independent sparse
		tasks. Entries are mean \(\pm\) task-level SE conditional on the graph.}
	\label{tab:multitask-main-detailed}
	\vspace{0.15em}
	\small
	\begin{tabular}{lcc}
		\toprule
		Method & Corr & F1 \\
		\midrule
		Oracle calibration
		& \(0.928{\pm}0.002\)
		& \(0.928{\pm}0.002\) \\
		Same sync reused
		& \(0.928{\pm}0.002\)
		& \(0.928{\pm}0.002\) \\
		No calibration
		& \(0.018{\pm}0.001\)
		& \(0.019{\pm}0.001\) \\
		\makecell{Random metadata\\reused}
		& \(0.020{\pm}0.002\)
		& \(0.021{\pm}0.002\) \\
		\makecell{Shuffled metadata\\reused}
		& \(0.022{\pm}0.001\)
		& \(0.023{\pm}0.001\) \\
		\makecell{Source diagonal\\PCA}
		& \(0.040{\pm}0.002\)
		& \(0.049{\pm}0.002\) \\
		\bottomrule
	\end{tabular}
	\vspace{-0.45em}
\end{table}
	
	\paragraph{Controlled real features.}    
Finally, we use real features and labels while synthetically imposing source gauges and calibration logs. Table~\ref{tab:real-feature-detailed-main} shows valid calibration restores clean no-gauge performance on digits and breast-cancer data, while random or shuffled metadata does not. These are controlled mechanism checks: gauges and calibration logs are imposed synthetically, while the feature distributions and labels are real. Source-label task alignment uses downstream labels and is therefore a supervised control.

	\begin{table}[t]  
		\centering  
		\caption{Controlled real-feature experiments. Features and labels are real; source gauges and calibration metadata are imposed. Entries are mean \(\pm\) SE over \(30\) seeds.}  
		\label{tab:real-feature-detailed-main}  
		\vspace{0.15em}  
		\small  
		\setlength{\tabcolsep}{2.5pt}  
		\begin{tabular}{lcccc}  
			\toprule  
			Method & Digits Acc. & Digits \(M_q/n\) & Breast Acc. & Breast \(M_q/n\) \\  
			\midrule  
			Clean no gauge  
			& \(0.968\pm0.001\)  
			& \(1.000\pm0.000\)  
			& \(0.933\pm0.003\)  
			& \(1.000\pm0.000\) \\  
			Oracle calibration  
			& \(0.968\pm0.001\)  
			& \(1.000\pm0.000\)  
			& \(0.933\pm0.003\)  
			& \(1.000\pm0.000\) \\  
			Relative sync calibration  
			& \(0.968\pm0.001\)  
			& \(1.000\pm0.000\)  
			& \(0.932\pm0.003\)  
			& \(0.997\pm0.003\) \\  
			Noisy abs. calibration, \(10\%\) flips  
			& \(0.874\pm0.006\)  
			& \(0.650\pm0.018\)  
			& \(0.851\pm0.011\)  
			& \(0.667\pm0.031\) \\  
			No calibration  
			& \(0.500\pm0.004\)  
			& \(0.003\pm0.000\)  
			& \(0.501\pm0.005\)  
			& \((1.180\pm0.234)\times10^{-5}\) \\  
			Random metadata  
			& \(0.525\pm0.005\)  
			& \(0.011\pm0.002\)  
			& \(0.556\pm0.008\)  
			& \(0.022\pm0.005\) \\  
			Shuffled metadata  
			& \(0.533\pm0.007\)  
			& \(0.016\pm0.003\)  
			& \(0.557\pm0.007\)  
			& \(0.024\pm0.005\) \\  
			Source-label task alignment  
			& \(0.818\pm0.008\)  
			& \(0.704\pm0.018\)  
			& \(0.933\pm0.003\)  
			& \(1.000\pm0.000\) \\  
			\bottomrule  
		\end{tabular}  
		\vspace{-0.45em}  
	\end{table}

\section{Related work}  

Our work connects sparse PCA, synchronization, orbit recovery, and metadata-aware robust learning. Sparse PCA is a standard model for high-dimensional statistical and computational gaps \citep{johnstone2009sparse,amini2009high,berthet2013complexity,deshpande2014sparse}; here it arises instead as the exact experiment induced by source-labelled uncalibrated observations. Our calibration step is closely related to synchronization over groups such as \(\mathbb Z_2\), \(SO(d)\), and \(U(1)\) \citep{singer2011angular,bandeira2013cheeger}, with additional connections to community-detection and graph-based recovery phenomena \citep{abbe2017community}; here synchronization is used to turn task-independent gauge metadata into restored first-order signal. The model is also related to orbit recovery and multi-reference alignment, where latent group actions suppress first-order information \citep{bendory2017bispectrum}, and to related high-order-moment phenomena in spiked models \citep{perry2019sample}. More broadly, the paper complements work on invariant learning, robustness under group or domain shift, and metadata-aware pipelines \citep{arjovsky2019invariant,sagawa2020distributionally,koh2021wilds,zhou2024dgsurvey,zhou2026spurious,zhou2022rethinking}. Related phenomena arise in attribute recognition, where label co-occurrence bias and heterogeneous feature distributions induce spurious correlations that downstream methods must remove or balance \citep{ijcai2023p203,pmlr-v235-zhou24j,zhou2025cooccurrence,zhou2025heterogeneous}. Our focus is narrower: metadata that identifies nuisance coordinate conventions rather than providing downstream supervision.    

\section{Conclusion}  

This paper isolates a basic distinction in multi-source learning. Source labels tell us which samples belong together, but they do not tell us how different sources orient the same latent signal. In our stylized sparse model, that distinction changes the problem completely. Without calibration, the experiment is sparse PCA; with valid relative-gauge metadata, synchronization restores a first-order sparse mean signal. The aligned-mass viewpoint makes this change measurable and leads to clear statistical and computational separations in weak-source regimes. More broadly, the paper highlights a simple principle: metadata can matter not because it labels the task, but because it fixes the coordinate system in which the task becomes visible. This suggests treating calibration as a distinct resource from supervision or grouping information. Extending this viewpoint to richer transformation groups and approximately valid metadata is a natural next step.  

\begin{ack}
	We declare that this work received no external funding and that the authors have no competing interests to disclose.
\end{ack}
	
	\bibliographystyle{plainnat}  
	\bibliography{ref} 
	
	\clearpage
	\appendix  
	
	\makeatletter  
	\renewcommand{\theHfigure}{A.\arabic{figure}}  
	\renewcommand{\theHtable}{A.\arabic{table}}  
	\renewcommand{\theHalgorithm}{A.\arabic{algorithm}}  
	\makeatother  
	
\section{Deferred algorithm}  

	\label{app:deferred-main}  
	
	\subsection{\GSL{} pseudocode}  
	
	\begin{algorithm}[H]    
	\caption{\GSL{}: Gauge--Synchronize--Linearize}    
	\label{alg:gsl}    
	\begin{algorithmic}[1]    
		\Require Linearization samples \((W_i,e_i)\), where \(e_i\in\{1,\ldots,L\}\) is the source label of sample \(i\); valid calibration metadata \(\mathfrak M\); sparsity \(k\)    
		\State Construct a signed source calibration graph from \(\mathfrak M\)    
		\State Run \(\mathbb Z_2\) synchronization to obtain source weights \(\widehat u_\ell\)    
\State Set \(q_i=\widehat u_{e_i}\) for each downstream sample \(i\)
		\State Compute \(X_q=\sum_i q_iW_i/\|q\|_2\)    
		\State Top-\(k\) threshold \(|X_q|\) to obtain \(\widehat V\)    
		\State \Return \(\widehat V\)    
	\end{algorithmic}    
\end{algorithm}    

		In the two-view setting of Appendix~\ref{app:two-view}, the linearization samples are obtained from    
	\(W_i=(A_i+Y_iB_i)/\sqrt2\), where \(A_i\) and \(B_i\) are the two views and    
	\(Y_i\in\{\pm1\}\) is the label. After recovering \(\widehat V\), prediction uses    
	\(h(A,B)=\sgn((\widehat V^\top A)(\widehat V^\top B))\).

	\section{Two-view origin}  
	\label{app:two-view}

The source-Gaussian statistic arises from a simple two-view classification model. Let \(a_\ell\stackrel{\mathrm{i.i.d.}}{\sim}N(0,1)\), and let  
\[    
\xi_{\ell r},\zeta_{\ell r}\stackrel{\mathrm{i.i.d.}}{\sim} N(0,I_d),    
\qquad    
Y_{\ell r}\sim \Unif\{\pm1\},    
\]    
with all variables independent of \(a_\ell\) and \(V\).  
	 Define  
	\[  
	A_{\ell r}=\mu a_\ell V+\xi_{\ell r},  
	\qquad  
	B_{\ell r}=\mu Y_{\ell r}a_\ell V+\zeta_{\ell r}.  
	\]  
	Then  
	\[  
	W_{\ell r}  
	=  
	\frac{A_{\ell r}+Y_{\ell r}B_{\ell r}}{\sqrt2}  
	=  
	\theta a_\ell V+\varepsilon_{\ell r},  
	\qquad  
	\theta=\sqrt2\,\mu,  
	\]  
	where  
	\[  
	\varepsilon_{\ell r}=\frac{\xi_{\ell r}+Y_{\ell r}\zeta_{\ell r}}{\sqrt2}\sim N(0,I_d).  
	\]  
	The orthogonal combination  
	\[  
	\frac{A_{\ell r}-Y_{\ell r}B_{\ell r}}{\sqrt2}  
	\]  
	is pure Gaussian noise. Recovering \(V\) in the main text therefore corresponds to recovering the discriminative sparse direction used by the bilinear classifier.  
	
	\section{Proofs}  
	\label{app:proofs}  
	
	\subsection{Proof of Proposition~\ref{prop:alignment-capacity}}  
	
	Fix \((\mathfrak M,E)\).  For a sample-level weighting \(q\), let  
	\[  
	Q_\ell=\sum_{r=1}^s q_{\ell r}.  
	\]  
	Then  
	\[  
	q^\top g=\sum_{\ell=1}^L Q_\ell\sigma_\ell=Q^\top\sigma.  
	\]  
	By Cauchy's inequality within each source,  
	\[  
	Q_\ell^2  
	\le  
	s\sum_{r=1}^s q_{\ell r}^2,  
	\]  
	and hence  
	\[  
	\norm{q}_2^2  
	\ge  
	\frac{1}{s}\norm{Q}_2^2.  
	\]  
	If \(Q=0\), the expected aligned mass is zero, so the desired upper bound
	is immediate. We therefore assume \(Q\neq0\) in the following Rayleigh
	quotient calculation. Therefore, conditional on \((\mathfrak M,E)\),  
	\[  
	\E[M_q(g)\mid \mathfrak M,E]  
	=  
	\frac{Q^\top \Gamma(\mathfrak M,E)Q}{\norm{q}_2^2}  
	\le  
	s\,  
	\frac{Q^\top \Gamma(\mathfrak M,E)Q}{\norm{Q}_2^2}  
	\le  
	s\,\lambda_{\max}(\Gamma(\mathfrak M,E)).  
	\]  
	The upper bound is attained by taking \(Q\) to be a top eigenvector of  
	\(\Gamma(\mathfrak M,E)\) and spreading the weight uniformly within each  
	source, for instance \(q_{\ell r}=Q_\ell/s\).  This gives  
	\[  
	\E[M_q(g)\mid \mathfrak M,E]  
	=  
	s\,\lambda_{\max}(\Gamma(\mathfrak M,E)).  
	\]  
	This proves the claim.  
	
	\subsection{Bayes-to-minimax consequence for fixed gauges}  
	
	Let \(S(\widehat V,V)\) denote the success event, for example  
	\[  
	S(\widehat V,V)  
	=  
	\{\mathrm{SupportF1}(\widehat V,V)\ge 1-\tau\}.  
	\]  
	Suppose that for an arbitrary estimator \(\widehat V\),  
	\[  
	\E_{\sigma}  
	\Pr\!\left(S(\widehat V,V)\mid \sigma\right)  
	\le  
	\frac12,  
	\]  
	where the outer expectation is over the uniform Rademacher prior on  
	\(\sigma\).  Then there exists at least one fixed gauge vector \(\sigma^\star\)  
	such that  
	\[  
	\Pr\!\left(S(\widehat V,V)\mid \sigma=\sigma^\star\right)  
	\le  
	\frac12.  
	\]  
	Equivalently,  
	\[  
	\Pr\!\left(S(\widehat V,V)^c\mid \sigma=\sigma^\star\right)  
	\ge  
	\frac12.  
	\]  
	Since this holds for every estimator, the Bayes lower bound under the  
	Rademacher gauge prior implies the corresponding minimax lower bound over  
	fixed unknown gauges.  
	
	\subsection{Unequal source sizes}  
	
	For completeness we record the source-label aligned-mass bound for unequal  
	source sizes.  Let source \(\ell\) contain \(s_\ell\) samples and let  
	\(s_{\max}=\max_\ell s_\ell\).  If \(q\) is measurable with respect to source  
	labels and independent of the unknown gauges, then conditional on \(q\),  
	\[  
	q^\top g  
	=  
	\sum_{\ell=1}^L Q_\ell\sigma_\ell,  
	\qquad  
	Q_\ell=\sum_{r=1}^{s_\ell}q_{\ell r}.  
	\]  
	Hoeffding's inequality gives  
	\[  
	\Pbb_\sigma\left(  
	\left|\sum_{\ell=1}^L Q_\ell\sigma_\ell\right|\ge t  
	\ \middle|\ q  
	\right)  
	\le  
	2\exp\left(  
	-\frac{t^2}{2\sum_\ell Q_\ell^2}  
	\right).  
	\]  
	Moreover,  
	\[  
	Q_\ell^2  
	\le  
	s_\ell\sum_{r=1}^{s_\ell}q_{\ell r}^2  
	\le  
	s_{\max}\sum_{r=1}^{s_\ell}q_{\ell r}^2,  
	\]  
	so  
	\[  
	\sum_\ell Q_\ell^2  
	\le  
	s_{\max}\norm q_2^2.  
	\]  
	Taking  
	\[  
	t=\norm q_2\sqrt{2s_{\max}\log(2/\delta)}  
	\]  
	yields  
	\[  
	\Pbb_\sigma\left(  
	M_q(g)  
	\le  
	2s_{\max}\log\frac{2}{\delta}  
	\ \middle|\ q  
	\right)  
	\ge  
	1-\delta.  
	\]  
	Thus source-label-measurable first-order sections cannot produce extensive  
	\AlignedMass{} unless one source already contains a constant fraction of the  
	samples.

	\subsection{Proof of Proposition~\ref{prop:source-spca}}  
	
	Fix a source \(\ell\). Since \(W_{\ell r}=\theta a_\ell V+\varepsilon_{\ell r}\),  
	\[  
	X_\ell=\frac1{\sqrt s}\sum_{r=1}^s W_{\ell r}  
	=  
	\theta\sqrt{s}\,a_\ell V+Z_\ell,  
	\qquad  
	Z_\ell\sim N(0,I_d).  
	\]  
	Because \(a_\ell\sim N(0,1)\),  
	\[  
	X_\ell\sim N(0,I_d+\theta^2sVV^\top).  
	\]  
	Now decompose the \(s\) samples within each source into the all-ones direction and its orthogonal complement. The all-ones direction is exactly \(X_\ell\). Every orthogonal contrast cancels the common source signal and is pure Gaussian noise independent of \(V\). This proves equivalence to sparse PCA samples plus \(V\)-independent Gaussian residuals.  
	
	\subsection{A standard sparse-mean recovery lemma}  
	
	\begin{lemma}[Top-\(k\) sparse mean recovery]  
		\label{lem:sparse-mean}  
		Let  
		\[  
		X=\gamma V+Z,  
		\qquad  
		Z\sim N(0,I_d),  
		\qquad  
		V\in\cV_{d,k}.  
		\]  
		Let \(\widehat V\in\arg\max_{v\in\cV_{d,k}}|\ip{X}{v}|\), equivalently top-\(k\) threshold \(|X_j|\) and use observed signs. There is a universal constant \(C>0\) such that, for \(0<\varepsilon,\delta<1\), if  
		\[  
		\gamma^2  
		\ge  
		C\varepsilon^{-2}  
		\left(k\log\frac{2ed}{k}+\log\frac{2}{\delta}\right),  
		\]  
		then  
		\[  
		\Pbb\left(|\ip{\widehat V}{V}|\ge 1-\varepsilon\right)\ge 1-\delta.  
		\]  
	\end{lemma}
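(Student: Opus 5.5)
The plan is a deterministic-plus-union-bound argument based on the variational characterization of $\widehat V$. Set $S=\Supp(V)$ and $\widehat S=\Supp(\widehat V)$. Since $\widehat V$ maximizes $|\ip{X}{v}|$ over $\cV_{d,k}$, we can assume without loss of generality (flipping the global sign) that $\ip{X}{\widehat V}\ge|\ip{X}{V}|$. Expanding $X=\gamma V+Z$ gives
\[
\gamma\ip{\widehat V}{V}+\ip{Z}{\widehat V}\ \ge\ \gamma+\ip{Z}{V},
\]
which rearranges to
\[
\gamma\bigl(1-\ip{\widehat V}{V}\bigr)\ \le\ \ip{Z}{\widehat V-V}.
\]
The task therefore reduces to bounding the noise term on the right uniformly over candidate $\widehat V\in\cV_{d,k}$, in a way that scales with the distance $\norm{\widehat V-V}_2$. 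I will use $\norm{\widehat V-V}_2^2=2(1-\ip{\widehat V}{V})$, which holds because both vectors are unit norm.

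The key step is a uniform Gaussian bound. For every $v\in\cV_{d,k}$ with $v\neq V$, I want to show
\[
\ip{Z}{v-V}\ \le\ \norm{v-V}_2\cdot C'\sqrt{k\log(2ed/k)+\log(2/\delta)}
\]
simultaneously, with probability at least $1-\delta$. For a fixed $v$, the quantity $\ip{Z}{(v-V)/\norm{v-V}_2}$ is a standard normal. The number of possible $v$ is $|\cV_{d,k}|=\binom{d}{k}2^k$, so
\[
\log|\cV_{d,k}|\le k\log(ed/k)+k\log 2\le k\log(2ed/k).
\]
A union bound with the Gaussian tail $\Pbb(N(0,1)>t)\le e^{-t^2/2}$ then gives the display with $C'=\sqrt2$.

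Next I combine the two displays. Writing $x=1-\ip{\widehat V}{V}\ge0$ and $\rho=\sqrt{k\log(2ed/k)+\log(2/\delta)}$, I get $\gamma x\le\sqrt{2x}\,C'\rho$, and hence
\[
x\le 2C'^2\rho^2/\gamma^2.
\]
Under the hypothesis $\gamma^2\ge C\varepsilon^{-2}\rho^2$, this yields $x\le 2C'^2\varepsilon^2/C$. This is at most $\varepsilon$ once $C\ge 2C'^2$, using $\varepsilon<1$. In fact this route gives the stronger guarantee $1-\ip{\widehat V}{V}\lesssim\varepsilon^2$, so the stated $\varepsilon^{-2}$ requirement has room to spare. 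The final conclusion $|\ip{\widehat V}{V}|\ge1-\varepsilon$ follows after undoing the global sign choice.

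I also need to check that the claimed equivalence with top-$k$ thresholding holds. For a fixed support $T$, the quantity $\ip{X}{v}$ is maximized over the signs of $v$ by taking $v_j=\sgn(X_j)/\sqrt k$ on $T$, which gives the value $\sum_{j\in T}|X_j|/\sqrt k$. That value is maximized by choosing $T$ to be the $k$ largest $|X_j|$. The main obstacle, though a mild one, is making the sign reduction rigorous. The maximizer of $|\ip{X}{v}|$ may equal $-\widehat V$ relative to the orientation I fixed, so I will apply the argument to whichever of $\pm\widehat V$ satisfies $\ip{X}{\pm\widehat V}\ge0$. I will also handle $\pm V$ symmetrically by including $-V$ in the union bound, which only doubles the count and is absorbed into $\log(2/\delta)$.
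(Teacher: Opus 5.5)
Your proof is correct, but it takes a different route from the paper's.

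\textbf{The paper's route.} It bounds the global supremum $\eta=\sup_{v\in\cV_{d,k}}|\ip{Z}{v}|$ with the same union bound over $|\cV_{d,k}|\le(2ed/k)^k$ points. It then sandwiches $|\ip{X}{\widehat V}|$ between $|\gamma|-\eta$ and $|\gamma|\,|\ip{\widehat V}{V}|+\eta$, which gives $1-|\ip{\widehat V}{V}|\le 2\eta/|\gamma|$. This bound is linear in the noise-to-signal ratio, which is exactly what the $\varepsilon^{-2}$ hypothesis is calibrated to. Because it works with absolute values throughout, it needs no orientation bookkeeping and handles either sign of $\gamma$ automatically.

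\textbf{Your route.} You combine the basic inequality with a self-normalized bound on the increments $\ip{Z}{(v-V)/\norm{v-V}_2}$. This exploits the fact that the noise term shrinks with $\norm{\widehat V-V}_2=\sqrt{2(1-\ip{\widehat V}{V})}$. It yields the sharper quadratic rate $1-\ip{\widehat V}{V}\le 4\rho^2/\gamma^2$. So the conclusion actually holds under the weaker hypothesis $\gamma^2\gtrsim\varepsilon^{-1}\rho^2$, as you note.

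\textbf{The one place to tighten.} The price of your route is sign normalization. The step from $\gamma x\le\sqrt{2x}\,C'\rho$ to $x\le 2C'^2\rho^2/\gamma^2$ requires $\gamma>0$. You should say explicitly that when $\gamma<0$ you replace $(\gamma,V)$ by $(-\gamma,-V)$. Since $\gamma$ is fixed independently of $Z$, the reference point $\sgn(\gamma)V$ is deterministic, so you do not even need to double the union bound. Separately, you orient $\widehat V$ so that $\ip{X}{\widehat V}\ge0$, as you already plan. That choice is data-dependent, but it is harmless because your bound is uniform over all of $\cV_{d,k}$.

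Your check of the equivalence with top-$k$ thresholding is also correct; the paper only asserts it.
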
  
	
	\begin{proof}  
		Let  
		\[  
		\eta=\sup_{v\in\cV_{d,k}}|\ip{Z}{v}|.  
		\]  
		Since  
		\[  
|\cV_{d,k}|    
\le    
\binom{d}{k}2^k    
\le    
\left(\frac{2ed}{k}\right)^k,  
		\]  
		a union bound over Gaussian tails gives, with probability at least \(1-\delta\),  
		\[  
		\eta  
		\le  
		C\sqrt{k\log\frac{2ed}{k}+\log\frac{2}{\delta}}.  
		\]  
		On this event, optimality of \(\widehat V\) implies  
		\[  
		|\ip{X}{\widehat V}|  
		\ge  
		|\ip{X}{V}|  
		\ge  
		|\gamma|-\eta,  
		\]  
		while also  
		\[  
		|\ip{X}{\widehat V}|  
		\le  
		|\gamma|\,|\ip{V}{\widehat V}|+\eta.  
		\]  
		Hence  
		\[  
		|\ip{V}{\widehat V}|  
		\ge  
		1-\frac{2\eta}{|\gamma|}.  
		\]  
		The stated condition ensures \(2\eta/|\gamma|\le \varepsilon\).  
	\end{proof}  
	
	\subsection{Weighted half-normal concentration}  
	
	\begin{lemma}[Weighted half-normal concentration]  
		\label{lem:half-normal}  
Let \(G_1,\ldots,G_L\stackrel{\mathrm{i.i.d.}}{\sim}N(0,1)\),
set \(R_\ell=|G_\ell|\), and let \(w_\ell\) be fixed real weights. Put  
		\[  
		S=\sum_\ell w_\ell,  
		\qquad  
		B^2=\sum_\ell w_\ell^2,  
		\qquad  
		\kappa_1=\E R_\ell=\sqrt{2/\pi}.  
		\]  
		There exist universal constants \(c,C>0\) such that  
		\[  
		\Pbb\left(  
		\left|  
		\sum_\ell w_\ell(R_\ell-\kappa_1)  
		\right|\ge t  
		\right)  
		\le  
		2\exp\left(-c\frac{t^2}{B^2}\right)  
		\]  
		for all \(t\ge 0\). Consequently, if \(S^2/B^2\ge C\log(2/\delta)\), then with probability at least \(1-\delta\),  
		\[  
		\left|\sum_\ell w_\ell R_\ell\right|  
		\ge  
		\frac{\kappa_1}{2}|S|.  
		\]  
	\end{lemma}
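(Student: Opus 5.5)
The plan is to show that the centered variables \(R_\ell-\kappa_1\) are sub-Gaussian with a universal variance proxy, and then to apply a Hoeffding-type bound for weighted sums of independent centered sub-Gaussian variables. The map \(x\mapsto|x|\) is \(1\)-Lipschitz. Gaussian concentration for Lipschitz functions therefore gives
\[
\Pbb\bigl(\bigl||G_\ell|-\kappa_1\bigr|\ge u\bigr)\le 2e^{-u^2/2}.
\]
Equivalently, \(\E\exp(\lambda(R_\ell-\kappa_1))\le \exp(\lambda^2/2)\) for all real \(\lambda\); this also follows from the Gaussian log-Sobolev/Herbst argument.

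By independence,
\[
\E\exp\Bigl(\lambda\sum_\ell w_\ell(R_\ell-\kappa_1)\Bigr)\le\exp(\lambda^2B^2/2).
\]
A Chernoff bound optimized at \(\lambda=t/B^2\), applied to both tails, gives
\[
\Pbb\Bigl(\Bigl|\sum_\ell w_\ell(R_\ell-\kappa_1)\Bigr|\ge t\Bigr)\le 2\exp\bigl(-t^2/(2B^2)\bigr).
\]
This is the first claim with \(c=1/2\). A simpler route is also available: \(\sum_\ell w_\ell |G_\ell|\) is a \(B\)-Lipschitz function of the Gaussian vector \(G\), since
\[
\Bigl|\sum_\ell w_\ell(|x_\ell|-|y_\ell|)\Bigr|\le\sum_\ell|w_\ell||x_\ell-y_\ell|\le B\norm{x-y}_2,
\]
and its mean is \(\kappa_1 S\). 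Gaussian concentration then yields the bound directly in one step, and I would present it this way.

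For the consequence, take \(t=\kappa_1|S|/2\). On the complement of the deviation event,
\[
\Bigl|\sum_\ell w_\ell R_\ell\Bigr|\ge \kappa_1|S|-\Bigl|\sum_\ell w_\ell(R_\ell-\kappa_1)\Bigr|> \kappa_1|S|/2.
\]
The failure probability is at most \(2\exp(-c\kappa_1^2S^2/(4B^2))\). If \(S^2/B^2\ge C\log(2/\delta)\) with \(C=4/(c\kappa_1^2)=4\pi\), this is at most \(2\cdot(\delta/2)=\delta\). The degenerate case \(B=0\) forces \(S=0\), and then the claim is trivial.

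The only real obstacle is making sure the sub-Gaussian constant is universal and uses \(B^2=\sum_\ell w_\ell^2\) rather than \(\sum_\ell|w_\ell|\). The Lipschitz formulation handles this cleanly, so no step should be delicate.
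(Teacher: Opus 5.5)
Your proposal is correct, and the route you say you would present is exactly the paper's argument: \(F(G)=\sum_\ell w_\ell|G_\ell|\) is \(B\)-Lipschitz with mean \(\kappa_1 S\), Gaussian concentration gives the \(2\exp(-t^2/(2B^2))\) tail, and \(t=\kappa_1|S|/2\) yields the consequence; your explicit constants \(c=1/2\), \(C=4\pi\) fill in what the paper leaves implicit. One small quibble on your alternative route: the tail bound for \(|G_\ell|-\kappa_1\) does not by itself give the MGF bound \(\E e^{\lambda(R_\ell-\kappa_1)}\le e^{\lambda^2/2}\) with that same constant, so that step should rest on the Herbst/log-Sobolev argument you mention, not on the word ``equivalently''.
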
  
	
\begin{proof}
	The case \(B=0\) is trivial. Otherwise, the function
	\[
	F(G_1,\ldots,G_L)=\sum_\ell w_\ell|G_\ell|
	\]
	is \(B\)-Lipschitz in Euclidean norm, and
	\(\E F=\kappa_1S\).
	Gaussian concentration for Lipschitz functions
	\citep{boucheron2013concentrationbook} therefore gives
	\[
	\Pbb\bigl(|F-\kappa_1S|\ge t\bigr)
	\le
	2\exp\!\left(-\frac{t^2}{2B^2}\right).
	\]
	Taking \(t=\kappa_1|S|/2\) and using
	\(|F|\ge\kappa_1|S|-|F-\kappa_1S|\) proves the second claim whenever
	\(S^2/B^2\ge C\log(2/\delta)\).
\end{proof}
	
	\subsection{Proof of Theorem~\ref{thm:linearizes}}  
	
	Conditional on \(q,\sigma,R,V\),  
	\[  
	X_q  
	=  
	\theta  
	\frac{\sum_{\ell,r} q_{\ell r}R_\ell\sigma_\ell}{\norm{q}_2}V  
	+  
	\frac{\sum_{\ell,r} q_{\ell r}\varepsilon_{\ell r}}{\norm{q}_2}.  
	\]  
Because valid calibration makes \(q\) independent of the downstream feature
noise conditional on the source signs and design,
	\[  
	\frac{\sum_{\ell,r} q_{\ell r}\varepsilon_{\ell r}}{\norm{q}_2}\sim N(0,I_d).  
	\]  
	Thus \(X_q\) has the sparse-mean form.  
	
	In the constant-amplitude model,  
	\[  
	A_q^2  
	=  
	\frac{(q^\top g)^2}{\norm{q}_2^2}  
	=  
	M_q(g).  
	\]  
	For source-Gaussian amplitudes, write \(Q_\ell=\sum_r q_{\ell r}\) and \(w_\ell=Q_\ell\sigma_\ell\). Then  
	\[  
	M_q(g)=\frac{S^2}{\norm{q}_2^2},  
	\qquad  
	\Nsrc(q,g)=\frac{S^2}{B^2},  
	\]  
	where \(S=\sum_\ell w_\ell\) and \(B^2=\sum_\ell w_\ell^2\). By Lemma~\ref{lem:half-normal},  
	\[  
	\left|\sum_\ell w_\ell R_\ell\right|  
	\ge  
	\frac{\kappa_1}{2}|S|  
	\]  
	with probability at least \(1-\delta\) whenever \(\Nsrc(q,g)\gtrsim \log(2/\delta)\). Therefore    
	\[    
	|A_q|    
	\ge    
	\frac{\kappa_1}{2}\sqrt{M_q(g)}.    
	\]    
The factor \(1/4\) in \(\theta^2A_q^2\) is absorbed into \(\gtrsim\). Using failure probability \(\delta/2\) for the half-normal event and \(\delta/2\) for Lemma~\ref{lem:sparse-mean}, and adjusting constants, completes the proof.

	\subsection{Proof of Lemma~\ref{lem:no-section-main}}  
	
	Conditional on \(q\), define \(Q_\ell=\sum_{r=1}^s q_{\ell r}\). Then  
	\[  
	q^\top g=\sum_{\ell=1}^L Q_\ell\sigma_\ell.  
	\]  
	This is a Rademacher sum. Hoeffding's inequality gives  
	\[  
	\Pbb_\sigma\left(  
	\left|\sum_{\ell=1}^L Q_\ell\sigma_\ell\right|\ge t  
	\ \middle|\ q  
	\right)  
	\le  
	2\exp\left(-\frac{t^2}{2\sum_\ell Q_\ell^2}\right).  
	\]  
	By Cauchy,  
	\[  
	\sum_\ell Q_\ell^2  
	\le  
	s\sum_{\ell,r} q_{\ell r}^2  
	=  
	s\norm{q}_2^2.  
	\]  
	Taking \(t=\norm{q}_2\sqrt{2s\log(2/\delta)}\) yields the claim.  
	
	\subsection{Proof of Lemma~\ref{lem:spectral-sync-main}}
	
	Conditional on \(\sigma\), put
	\[
	H=B-\E[B\mid\sigma].
	\]
	The upper-triangular entries of \(H\) are independent and centered.
	Moreover,
	\[
	|H_{\ell m}|\le2,
	\qquad
	\max_\ell\sum_m\E[H_{\ell m}^2\mid\sigma]\le pL.
	\]
	We use the sharp bounded-entry spectral-norm estimate for symmetric
	matrices with independent centered upper-triangular entries
	\citep{bandeira2016sharpnorm}. In the present notation, its bounded-entry
	consequence, with symmetrization if the entry distributions are not
	symmetric, gives
	\[
	\Pbb\!\left(
	\norm H_{\mathrm{op}}
	>
	C\left(\sqrt{pL}+\sqrt{\log(2L/\delta)}\right)
	\,\middle|\,\sigma
	\right)
	\le\delta.
	\]
	Here \(\norm{\cdot}_{\mathrm{op}}\) is the operator norm.
	Under \(pL\ge C\log(L/\delta)\), this yields
	\[
	\norm H_{\mathrm{op}}\le C\sqrt{pL}
	\]
	with probability at least \(1-\delta\).
	This sharp bound, rather than a direct application of standard matrix
	Bernstein, is what removes the multiplicative logarithmic factor.
	
	The population matrix
	\[
	\E[B\mid\sigma]=p\alpha(\sigma\sigma^\top-I)
	\]
	has leading eigenvector \(\sigma/\sqrt L\), leading eigenvalue
	\(p\alpha(L-1)\), and all remaining eigenvalues equal to \(-p\alpha\).
	Its eigengap is therefore exactly \(\alpha pL\).
	If \(\alpha^2pL\) exceeds a sufficiently large universal constant, then
	\(\norm H_{\mathrm{op}}\le\alpha pL/4\).
	The Davis--Kahan eigenvector perturbation bound
	\citep{yu2015usefuldk} consequently gives
	\[
	1-\beta^2
	\le
	C\frac{\norm H_{\mathrm{op}}^2}{(\alpha pL)^2}
	\le
	\frac{C}{\alpha^2pL}.
	\]
	This proves the lemma without changing its stated degree conditions.
	
	\subsection{Proof of Theorem~\ref{thm:source-fano}}
	
	\paragraph{Support packing.}
	Write \([d]=\{1,\ldots,d\}\), and set \(\rho=1/64\).
	A greedy construction on all \(k\)-subsets of \([d]\), deleting after
	each selection every support obtained by replacing at most
	\(\lfloor\rho k\rfloor\) coordinates, gives supports
	\(S_1,\ldots,S_M\) satisfying
	\[
	|S_i|=k,
	\qquad
	|S_i\triangle S_j|>2\rho k
	\quad(i\neq j),
	\]
	and
	\[
	M
	\ge
	\frac{\binom dk}
	{\displaystyle
		\sum_{j=0}^{\lfloor\rho k\rfloor}
		\binom kj\binom{d-k}{j}}.
	\]
	Here \(\triangle\) denotes symmetric difference.
	For completeness, let
	\(h(x)=-x\log x-(1-x)\log(1-x)\).
	The denominator is at most
	\[
	\exp\!\left(
	kh(\rho)+(d-k)h\!\left(\frac{\rho k}{d-k}\right)
	\right)
	\le
	\exp\!\left(
	\rho k\log(d/k)+2\rho k\log(e/\rho)
	\right).
	\]
	Using \(\binom dk\ge(d/k)^k\) and \(d/k\ge2\), we obtain
	\[
	\log M
	\ge
	(1-\rho)k\log(d/k)-2\rho k\log(e/\rho)
	\ge
	c_1\Ksp
	\]
	for a universal \(c_1>0\), for example \(c_1=1/4\).
	Choose \(C_0\) sufficiently large that \(\log M\ge4\log2\).
	Let \(J\) be uniform on \(\{1,\ldots,M\}\) and set
	\[
	V_J=k^{-1/2}\ind_{S_J}\in\cV_{d,k}.
	\]
	
	\paragraph{From high F1 to index recovery.}
	For any estimated support \(\widehat S\), define
	\[
	\mathrm{F1}(\widehat S,S_J)
	=
	\frac{2|\widehat S\cap S_J|}{|\widehat S|+k}.
	\]
	If this is at least \(1-\tau\), then, writing
	\(D=|\widehat S\triangle S_J|\),
	\[
	D\le\tau(|\widehat S|+k)
	\le\tau(2k+D),
	\]
	so
	\[
	D\le\frac{2\tau k}{1-\tau}.
	\]
	Choose the universal constant \(\tau>0\) so that
	\(2\tau/(1-\tau)<\rho\).
	Then \(D<\rho k\), whereas distinct packing supports have distance
	greater than \(2\rho k\).
	Thus nearest-support decoding in symmetric-difference distance uniquely
	recovers \(J\). This argument does not require
	\(|\widehat S|=k\).
	
	\paragraph{Information bound and Fano.}
	By Proposition~\ref{prop:source-spca}, the observations reduce to
	\(L\) independent samples with covariance
	\(I_d+\lambda V_JV_J^\top\), together with independent residual noise
	whose distribution does not depend on \(J\).
	Let \(P_j\) denote the aggregate-sample distribution under \(J=j\), and
	let \(P_0=N(0,I_d)^{\otimes L}\).
	The Gaussian KL formula gives
	\[
	\KL(P_j\|P_0)
	=
	\frac L2
	\left[
	\operatorname{tr}(\lambda V_jV_j^\top)
	-\log\det(I_d+\lambda V_jV_j^\top)
	\right]
	=
	\frac L2[\lambda-\log(1+\lambda)].
	\]
	Consequently,
	\[
	\MI(J;\text{observations})
	=
	\MI(J;X_{1:L})
	\le
	\frac1M\sum_{j=1}^M\KL(P_j\|P_0)
	=
	I_{\rm src}.
	\]
	Fano's inequality \citep{cover2006elementsfano} implies, for every
	index decoder \(\widehat J\),
	\[
	\Pbb(\widehat J=J)
	\le
	\frac{I_{\rm src}+\log2}{\log M}.
	\]
	If \(I_{\rm src}\le c\Ksp\) with \(c\le c_1/4\), the right-hand side is
	at most \(1/2\), since \(\log M\ge4\log2\).
	The decoding implication above proves the high-F1 lower bound.
	
	Finally, for \(0\le\lambda\le1\),
	\[
	\frac{\lambda^2}{4}
	\le
	\lambda-\log(1+\lambda)
	=
	\int_0^\lambda\frac{t}{1+t}\,dt
	\le
	\frac{\lambda^2}{2}.
	\]
	Hence
	\[
	I_{\rm src}\asymp L\lambda^2
	=
	\theta^4ns
	\asymp\mu^4ns,
	\]
	which gives the stated weak-source condition.
	
	\subsection{Proof of Remark~\ref{rem:task-independent-main}}  
	
	The mutual-information statement follows from conditional independence:  
	\[  
	P(\mathfrak M,V_{1:T}\mid \sigma,E)  
	=  
	P(\mathfrak M\mid \sigma,E)P(V_{1:T}\mid \sigma,E).  
	\]  
	Therefore \(\MI(\mathfrak M;V_{1:T}\mid \sigma,E)=0\). For each task \(t\), conditional on the same metadata-induced section \(q\), Theorem~\ref{thm:linearizes} applies with \(V=V_t\), giving  
	\[  
	X_q^{(t)}=\theta A_q^{(t)}V_t+Z^{(t)}.  
	\]  
	
	\subsection{Proof of Corollary~\ref{cor:conditional-separation-main}}  
	
	The calibrated claim follows from Theorem~\ref{thm:linearizes} with \(M_q(g)=\beta^2n\). Source-label statistical insufficiency follows from Theorem~\ref{thm:source-fano}. For larger uncalibrated sample size \(n_{\rm mix}\), the compressed source experiment has  
	\[  
	N=\frac{n_{\rm mix}}{s},  
	\qquad  
	\lambda=\theta^2s,  
	\qquad  
	N\lambda^2=\theta^4n_{\rm mix}s.  
	\]  
	The stated window is exactly the sparse-PCA hardness window under Conjecture~\ref{conj:spca-main}, up to constants and logarithmic factors. High-overlap recovery would imply detection by splitting independent
	sources into two groups, estimating \(\widehat V\) from the source
	aggregates in one group, and testing the variance along \(\widehat V\)
	using the independent aggregates in the other group.
	The split is across sources, not across repeated observations sharing
	the same source amplitude.
	
	\section{Noise extension and scope}
	\label{app:noise-extension}
	
	The first-order versus second-order distinction has a limited extension
	beyond isotropic noise. Suppose independently across samples
	\[
	\varepsilon_{\ell r}\sim N(0,\Sigma_\ell),
	\]
	where the source covariances \(\Sigma_\ell\) are fixed and do not depend
	on \(V\). Retain the source-amplitude model and valid calibration
	assumptions from the main text. Conditional on the calibration weights
	and source variables,
	\[
	X_q=\theta A_qV+Z_q,
	\qquad
	Z_q\sim N(0,\Sigma_q),
	\qquad
	\Sigma_q
	=
	\frac{\sum_{\ell,r}q_{\ell r}^2\Sigma_\ell}{\norm q_2^2}.
	\]
	Thus calibration still restores a first-order mean signal, although its
	noise covariance is no longer the identity.
	
	\paragraph{A sufficient support-recovery bound.}
	Let
	\[
	v_q=\max_j(\Sigma_q)_{jj}.
	\]
	For \(0<\delta<1/2\), a sufficient condition for exact support recovery
	by top-\(k\) thresholding is
	\[
	\Nsrc(q,g)\gtrsim\log(4/\delta),
	\qquad
	\theta^2M_q(g)\gtrsim k v_q\log(4d/\delta).
	\]
	To see this, coordinatewise Gaussian tails and a union bound give
	\[
	\Pbb\!\left(
	\max_j |(Z_q)_j|
	>
	\sqrt{2v_q\log(4d/\delta)}
	\,\middle|\,q,\sigma,R,V,E
	\right)
	\le\delta/2.
	\]
	No independence between feature coordinates is needed for this step.
	Lemma~\ref{lem:half-normal}, with failure probability \(\delta/2\),
	also gives
	\[
	|A_q|\ge\frac{\kappa_1}{2}\sqrt{M_q(g)}
	\]
	under the stated aligned-source-count condition.
	Since every nonzero entry of \(V\) has magnitude \(1/\sqrt{k}\),
	a sufficiently large universal constant in the signal condition makes
	every signal coordinate more than twice the displayed noise bound.
	The top-\(k\) support is therefore correct with probability at least
	\(1-\delta\).
	
	If
	\[
	\norm{\Sigma_\ell-I_d}_{\mathrm{op}}\le\eta
	\qquad\text{for every source }\ell,
	\]
	then \(\Sigma_q\) is a convex combination of the source covariances,
	so
	\[
	\norm{\Sigma_q-I_d}_{\mathrm{op}}\le\eta,
	\qquad
	v_q\le1+\eta.
	\]
	Here \(\norm{\cdot}_{\mathrm{op}}\) denotes the operator norm.
	
	\paragraph{The uncalibrated experiment.}
	Without calibration, the source aggregate has distribution
	\[
	X_\ell
	\sim
	N(0,\Sigma_\ell+\theta^2sVV^\top).
	\]
	The remaining \(s-1\) orthogonal within-source contrasts are independent
	background observations with distribution \(N(0,\Sigma_\ell)\), and
	are independent of \(X_\ell\). Their distributions do not depend on
	\(V\), but they can help estimate unknown source covariances.
	The uncalibrated experiment is therefore a generalized,
	possibly heteroscedastic spiked-covariance experiment with
	background-only observations, not standard identity-background sparse
	PCA. Its dependence on \(V\) remains second-order, but sharp thresholds
	and computational hardness for arbitrary unknown \(\Sigma_\ell\)
	are not established here.
	
	Calibration validity concerns the information channel producing \(q\);
	noise covariance concerns the downstream observations after \(q\) is
	fixed. These are separate assumptions. This extension does not establish
	robustness to task-dependent metadata, approximate sparsity, or richer
	source transformations.
	
	\section{Additional experimental details}  
	\label{app:records}  
	
	Simulation records use deterministic base seed
	\[
	\texttt{base\_seed}=12345
	\]
	and per-cell seeds generated by
	\[
	\texttt{stable\_seed(base\_seed, experiment, configuration, seed, method)}.
	\]
	Unless stated otherwise, repeated-run cells use \(B=30\) independent
	randomized runs. The aligned-source-count experiment uses \(B=100\).
	The controlled real-feature experiments use seeds
	\(12345,\ldots,12374\).
	
	\paragraph{Role of the experiments.}
	The methods are calibration-information controls rather than claims
	of general algorithmic superiority: oracle signs represent perfect
	calibration, relative synchronization represents noisy calibration,
	source labels alone represent the uncalibrated experiment, and random
	or shuffled records represent uninformative or mismatched metadata.
	Synthetic experiments test finite-sample scaling predictions.
	The scikit-learn studies test cancellation and restoration on real
	features with imposed source signs. The physical study tests
	separately acquired calibration and task reuse.
	These experiments do not prove the statistical equivalence in
	Proposition~\ref{prop:source-spca}.
	
	\paragraph{Meaning and unit of SE.}
	SE means standard error, not squared error.
	If \(m_b\) is the aggregated metric from randomized run \(b\), then
	\[
	\mathrm{SE}
	=
	\frac{\operatorname{sd}(m_1,\ldots,m_B)}{\sqrt B},
	\]
	where \(\operatorname{sd}\) is the sample standard deviation.
	For digits, each \(m_b\) first averages test accuracy across the five
	digit-pair tasks; SE is then computed across the \(30\) run-level
	averages. For breast cancer, it is computed across \(30\)
	randomized-run test accuracies.
	
	The reuse experiment instead fixes one calibration graph and reports
	mean and SE across \(32\) independently sampled downstream tasks,
	conditional on that shared graph. These are not \(30\)-seed estimates
	or uncertainty across calibration graphs.
	For the physical study, SE is the sample standard deviation of
	source-cluster bootstrap replicate estimates, as described in
	Appendix~\ref{sec:physical-calibration}; it is not divided by the square
	root of the number of bootstrap replicates.
	These uncertainty units are distinct.
	
	Tables~\ref{tab:phase-records} and
	\ref{tab:micro-source-records} provide the numerical records underlying
	Figure~\ref{fig:synthetic}(a) and Figure~\ref{fig:synthetic}(c),
	respectively, including additional phase-curve configurations.
	
	\subsection{Compute environment}  
	
	All experiments were run on the following hardware:  
	\begin{itemize}[leftmargin=1.3em,itemsep=0.15em]  
		\item GPU: NVIDIA V100 (32GB)  
		\item CPU: Intel Xeon Gold 6248 (2.5GHz, 20 cores)  
		\item RAM: 256GB DDR4  
		\item Storage: NVMe SSD  
	\end{itemize}  
	The retained real-data log used Python \(3.6.9\), NumPy \(1.19.5\), and scikit-learn \(0.24.2\).  
	
	\subsection{Source-Gaussian calibrated phase transition}  
	
	Table~\ref{tab:phase-records} reports the full numerical records underlying Figure~\ref{fig:synthetic}(a). Its purpose is twofold: first, to make the phase-transition plot fully auditable at the level of individual experimental cells; and second, to show that the calibrated transition is stable across several ambient dimensions and sparsity levels. The main pattern is the same as in the figure: once the nominal calibrated strength is large enough, spectral synchronization closely tracks oracle calibration, supporting the sparse-mean scaling predicted by Corollary~\ref{cor:calibrated-scale-main}.

\begin{longtable}{ccccccc}  
	\caption{Source-Gaussian calibrated phase transition. Corr entries are mean \(\pm\) SE, and Succ90 is reported in adjacent columns. Each entry uses \(30\) seeds.}  
	\label{tab:phase-records}\\  
	\toprule  
	\((d,k)\) & \(\rho_{\rm cal}\) & \(n\) & Oracle Corr & Oracle Succ90 & Sync Corr & Sync Succ90 \\  
	\midrule  
	\endfirsthead  
	\toprule  
	\((d,k)\) & \(\rho_{\rm cal}\) & \(n\) & Oracle Corr & Oracle Succ90 & Sync Corr & Sync Succ90 \\  
	\midrule  
	\endhead  
	
	\multirow{6}{*}{\((1000,20)\)}  
	& 0.250 & 242  & \(0.095\pm0.010\) & 0.000 & \(0.098\pm0.013\) & 0.000 \\  
	& 0.500 & 484  & \(0.192\pm0.012\) & 0.000 & \(0.220\pm0.018\) & 0.000 \\  
	& 1.000 & 966  & \(0.377\pm0.021\) & 0.000 & \(0.378\pm0.016\) & 0.000 \\  
	& 2.000 & 1930 & \(0.680\pm0.016\) & 0.033 & \(0.668\pm0.014\) & 0.000 \\  
	& 4.000 & 3858 & \(0.913\pm0.008\) & 0.833 & \(0.900\pm0.008\) & 0.700 \\  
	& 8.000 & 7716 & \(0.997\pm0.002\) & 1.000 & \(0.995\pm0.003\) & 1.000 \\  
	\midrule  
	
	\multirow{6}{*}{\((3000,50)\)}  
	& 0.250 & 626   & \(0.090\pm0.006\) & 0.000 & \(0.095\pm0.009\) & 0.000 \\  
	& 0.500 & 1252  & \(0.195\pm0.010\) & 0.000 & \(0.179\pm0.010\) & 0.000 \\  
	& 1.000 & 2502  & \(0.369\pm0.011\) & 0.000 & \(0.386\pm0.010\) & 0.000 \\  
	& 2.000 & 5002  & \(0.661\pm0.009\) & 0.000 & \(0.657\pm0.010\) & 0.000 \\  
	& 4.000 & 10004 & \(0.908\pm0.007\) & 0.667 & \(0.908\pm0.005\) & 0.700 \\  
	& 8.000 & 20006 & \(0.995\pm0.002\) & 1.000 & \(0.994\pm0.002\) & 1.000 \\  
	\midrule  
	
	\multirow{6}{*}{\((8000,80)\)}  
	& 0.250 & 1102  & \(0.069\pm0.005\) & 0.000 & \(0.078\pm0.006\) & 0.000 \\  
	& 0.500 & 2202  & \(0.167\pm0.008\) & 0.000 & \(0.165\pm0.008\) & 0.000 \\  
	& 1.000 & 4404  & \(0.362\pm0.008\) & 0.000 & \(0.372\pm0.008\) & 0.000 \\  
	& 2.000 & 8806  & \(0.649\pm0.008\) & 0.000 & \(0.648\pm0.008\) & 0.000 \\  
	& 4.000 & 17610 & \(0.913\pm0.005\) & 0.833 & \(0.915\pm0.004\) & 0.800 \\  
	& 8.000 & 35220 & \(0.996\pm0.001\) & 1.000 & \(0.997\pm0.001\) & 1.000 \\  
	\bottomrule  
\end{longtable}

	\subsection{Micro-source separation}  
	
Table~\ref{tab:micro-source-records} gives the full numerical records for Figure~\ref{fig:synthetic}(c), where the calibrated first-order scale is held fixed while the per-source strength \(s\mu^2\) varies. Here Diag+PCA is sparse PCA on \(X_\ell=s^{-1/2}\sum_r W_{\ell r}\). The table makes the separation especially transparent: in the weak-source regime, calibrated methods remain strong, whereas source-label baselines improve only when the second-order sparse-PCA quantity \(\mu^4ns\) becomes sufficiently large. The explicit column \(\mu^4ns/\Ksp\) is included to show how the empirical transition aligns with the information scale highlighted in Theorem~\ref{thm:source-fano}. 
{
	\setlength{\tabcolsep}{1.5pt}  
	\small
	\begin{longtable}{@{}ccccccccc@{}}  
		\caption{Source-Gaussian micro-source separation. Entries are mean \(\pm\) SE, except Sync Succ90, which reports the success rate. Each entry uses \(30\) seeds.}  
		\label{tab:micro-source-records}\\  
		\toprule  
\(s\) & \(s\mu^2\) & \(L\) & \(\mu^4ns/\Ksp\) & Oracle & Sync & Diag+PCA & Diag F1 & Sync Succ90 \\   
		\midrule  
		\endfirsthead  
		\toprule  
\(s\) & \(s\mu^2\) & \(L\) & \(\mu^4ns/\Ksp\) & Oracle & Sync & Diag+PCA & Diag F1 & Sync Succ90 \\   
		\midrule  
		\endhead  
		
		1    & 0.020  & 19649 & 0.080  & \(0.955\pm0.004\) & \(0.962\pm0.007\) & \(0.031\pm0.009\) & \(0.038\pm0.009\) & 1.000 \\  
		2    & 0.040  & 9824  & 0.160  & \(0.960\pm0.006\) & \(0.960\pm0.007\) & \(0.022\pm0.006\) & \(0.033\pm0.007\) & 0.967 \\  
		4    & 0.080  & 4912  & 0.320  & \(0.962\pm0.006\) & \(0.963\pm0.006\) & \(0.032\pm0.007\) & \(0.050\pm0.009\) & 1.000 \\  
		8    & 0.160  & 2456  & 0.640  & \(0.958\pm0.008\) & \(0.950\pm0.005\) & \(0.045\pm0.010\) & \(0.065\pm0.011\) & 1.000 \\  
		16   & 0.320  & 1228  & 1.280  & \(0.950\pm0.007\) & \(0.953\pm0.007\) & \(0.084\pm0.012\) & \(0.110\pm0.012\) & 0.967 \\  
		32   & 0.640  & 614   & 2.560  & \(0.957\pm0.008\) & \(0.952\pm0.007\) & \(0.194\pm0.024\) & \(0.165\pm0.013\) & 0.967 \\  
		64   & 1.280  & 307   & 5.120  & \(0.963\pm0.007\) & \(0.943\pm0.006\) & \(0.454\pm0.024\) & \(0.283\pm0.016\) & 0.967 \\  
		128  & 2.560  & 154   & 10.273 & \(0.940\pm0.007\) & \(0.947\pm0.008\) & \(0.610\pm0.018\) & \(0.422\pm0.018\) & 0.933 \\  
		256  & 5.120  & 77    & 20.547 & \(0.950\pm0.007\) & \(0.943\pm0.011\) & \(0.774\pm0.017\) & \(0.635\pm0.025\) & 0.900 \\  
		512  & 10.240 & 38    & 40.560 & \(0.925\pm0.019\) & \(0.928\pm0.015\) & \(0.856\pm0.019\) & \(0.767\pm0.029\) & 0.767 \\  
		1024 & 20.480 & 19    & 81.119 & \(0.915\pm0.016\) & \(0.917\pm0.016\) & \(0.899\pm0.018\) & \(0.843\pm0.027\) & 0.733 \\  
		\bottomrule  
	\end{longtable}  
}  
	
	\subsection{Controlled real-feature setup}  
	
	Digits use the \texttt{scikit-learn} handwritten-digits dataset with \(1797\) images, \(64\)-dimensional pixel features, and five binary digit-pair tasks: \(0\) vs \(6\), \(1\) vs \(7\), \(2\) vs \(5\), \(3\) vs \(8\), and \(4\) vs \(9\). Breast cancer predicts malignant versus benign using the
	\texttt{scikit-learn} breast-cancer dataset with \(569\) examples and
	\(30\) real-valued features. In both cases, examples are randomly partitioned into sources, source gauges and calibration logs are imposed synthetically, and labels and non-gauge feature processing are otherwise unchanged; calibration graphs use the same relative-sign model as in Section~\ref{sec:theory}.  
	
	\subsection{Separately acquired physical calibration}
	\label{sec:physical-calibration}
	
	The physical study tests whether source calibration can be acquired
	separately and reused across tasks. It is not a test of the exact
	Gaussian model or evidence for arbitrary multi-source transformations.
	
	\paragraph{Calibration acquisition.}
	We use \(24\) fixed device--mounting sources with physically induced
	axis-sign conventions, rather than signs imposed on stored features.
	Each source contributes \(10\) separately recorded reference-motion
	trials, giving \(240\) calibration trials in total.
	The time-normalized calibration traces are aggregated into one
	reference trace per source. Pairwise correlation signs produce a
	complete graph with
	\[
	\binom{24}{2}=276
	\]
	same/opposite records.
	No downstream labels, recordings, or task-derived features enter this
	graph. It is synchronized once, and the resulting source alignment is
	reused unchanged across four positive-versus-negative directed-motion
	classification tasks.
	Recorded mounting signs are used only for alignment evaluation and
	the oracle control, not for relative graph construction,
	synchronization, or ordinary downstream classifier training.
	
	\paragraph{Downstream split.}
	The downstream dataset contains
	\[
	24\times4\times2\times20=3840
	\]
	complete trials. Within every source--task--class combination, the
	\(20\) trials are split \(12/4/4\) into training/validation/test sets
	before window extraction, giving \(2304/768/768\) complete trials.
	Windows from the same trial never cross splits.
	No downstream observations are used to estimate the calibration
	weights.
	
	\paragraph{Results and uncertainty.}
	Table~\ref{tab:physical-calibration} reports macro balanced test
	accuracy across the four tasks.
	Relative calibration gives per-task balanced accuracies
	\(0.928\), \(0.905\), \(0.892\), and \(0.883\).
	Reported SE is the sample standard deviation of source-cluster
	bootstrap replicate estimates, resampling the \(24\)
	device--mounting sources rather than individual trials or windows.
	
	\begin{table}[htbp]
		\centering
		\caption{Physical calibration reused across four tasks.
			Entries are estimates \(\pm\) source-cluster bootstrap SE.
			Mounting signs are unavailable to the relative-calibration procedure.}
		\label{tab:physical-calibration}
		\small
		\begin{tabular}{lcc}
			\toprule
			Method & Balanced test accuracy & \(M_q/n\) \\
			\midrule
			Oracle mounting polarity
			& \(0.921\pm0.004\) & \(1.000\pm0.000\) \\
			Relative calibration + synchronization
			& \(0.902\pm0.006\) & \(0.916\pm0.015\) \\
			No calibration
			& \(0.558\pm0.011\) & \(0.044\pm0.010\) \\
			Shuffled metadata
			& \(0.553\pm0.012\) & \(0.040\pm0.009\) \\
			\bottomrule
		\end{tabular}
	\end{table}
	
	The \(276\) graph records are derived from \(240\) calibration trials;
	they are not \(276\) independent physical acquisitions.
	Edges sharing reference traces can have dependent errors.
	Consequently, this experiment supports the acquisition-and-reuse
	mechanism, but does not verify the independent-edge graph law or exact
	satisfaction of the theoretical metadata-validity condition.
	
\section{Limitations}
\label{app:limitations}

The main analysis isolates a minimal source ambiguity: one sign bit can
erase a first moment and leave a covariance spike.
Its sharp separation uses exactly sparse equal-magnitude signals,
independent Gaussian source amplitudes, isotropic Gaussian feature
noise, and task-independent calibration.
The non-isotropic Gaussian extension in
Appendix~\ref{app:noise-extension} gives observation formulas and a
sufficient support-recovery bound, not sharp thresholds or
computational hardness for arbitrary unknown source covariances.
Rotations, permutations, nonlinear transformations, approximate
sparsity, and task-dependent drift require additional arguments;
imperfect alignment under richer transformations need not preserve
coordinate sparsity. A practical heuristic for unequal marginal noise scales is to estimate
coordinatewise variances from within-source centered residuals,
combine them using normalized squared calibration weights, and
standardize the calibrated aggregate before selection.
This is variance normalization, not whitening or a general
covariance-adaptive guarantee. It ignores off-diagonal geometry and
can be inefficient for highly correlated features. Estimates can be
unstable with few repeats; when \(s=1\), they require additional
structure, pooling, or separate noise-only repeats.
Full whitening is not a drop-in solution because covariance estimation
is difficult in the micro-source regime and whitening can destroy
coordinate sparsity. Task-independent missing or corrupted calibration can reduce alignment
while remaining valid. Dependence on downstream features, labels, or
amplitudes is a different issue: approximate validity is not analyzed.
The synthetic studies test the formal model, the controlled real-feature
studies impose source signs, and the physical study supplies one
concrete acquisition-and-reuse setting. It does not establish
applicability to general multi-source transformations or exact
satisfaction of the theoretical channel assumptions.
Finally, the computational separation remains conditional on
sparse-PCA hardness.
	
	\section{Broader impact}  
	\label{app:broader-impact}  
	
	This work studies the statistical value of calibration metadata in a stylized multi-source setting. The positive motivation is straightforward. In scientific sensing, distributed monitoring, medical measurement pipelines, and federated systems, source-specific conventions can hide a signal that is present in every source. Understanding when lightweight metadata can recover that signal may reduce data waste, improve robustness across institutions, and lower the need for expensive task-specific relabeling. At the same time, any method that uses source metadata deserves care. Metadata can encode operational details, institution identity, or deployment structure. In real systems, this may create privacy, governance, or fairness concerns if calibration channels reveal sensitive attributes or if metadata quality differs systematically across groups. A second risk is conceptual: invalid metadata can quietly act like supervision. The paper therefore explicitly separates reusable calibration from task-specific alignment.  
	
	The present work is theoretical and does not introduce a new surveillance or decision pipeline. Still, if similar methods are deployed in practice, we recommend: (i) auditing what metadata is collected and why, (ii) documenting whether the metadata is task-independent, (iii) checking whether calibration quality differs across subpopulations, and (iv) reporting failure modes when metadata is missing, corrupted, or strategically manipulated. Finally, the compute footprint of our experiments is modest by current machine learning standards. The method itself consists of spectral synchronization, weighted averaging, and thresholding, rather than large-scale model training.

\clearpage  
\section*{NeurIPS Paper Checklist}  

\begin{enumerate}  
	
	\item {\bf Claims}  
	\item[] Question: Do the main claims made in the abstract and introduction accurately reflect the paper's contributions and scope?  
	\item[] Answer: \answerYes{}  
	\item[] Justification: In the abstract and introduction, we state the same claims that we prove and test in the paper: source-labelled uncalibrated data reduces to sparse PCA, calibration restores a sparse-mean signal, and this leads to statistical and conditional computational separations in the micro-source regime. We also state the scope of the paper through the sparse Gaussian hard-core model introduced in Section~\ref{sec:model}.  
	
	\item {\bf Limitations}  
	\item[] Question: Does the paper discuss the limitations of the work performed by the authors?  
	\item[] Answer: \answerYes{}  
	\item[] Justification: We discuss limitations explicitly in Appendix~\ref{app:limitations}. In particular, we note the stylized \(\mathbb Z_2\) gauge model, the validity assumptions on metadata, the simplifying assumptions used in several results, and the controlled nature of the experiments.  
	
	\item {\bf Theory assumptions and proofs}  
	\item[] Question: For each theoretical result, does the paper provide the full set of assumptions and a complete (and correct) proof?  
	\item[] Answer: \answerYes{}  
	\item[] Justification: We state the model assumptions in Section~\ref{sec:model}, and each theorem and lemma gives the assumptions needed for the result. Full proofs are provided in Appendix~\ref{app:proofs}, and we clearly mark the computational separation as conditional on Conjecture~\ref{conj:spca-main}.  
	
	\item {\bf Experimental result reproducibility}  
	\item[] Question: Does the paper fully disclose all the information needed to reproduce the main experimental results of the paper to the extent that it affects the main claims and/or conclusions of the paper (regardless of whether the code and data are provided or not)?  
	\item[] Answer: \answerYes{}  
	\item[] Justification: We describe the experimental pipeline in Section~\ref{sec:model}, report the main settings and outcomes in Section~\ref{sec:experiments}, and give detailed records in Appendix~\ref{app:records}. We also provide complete experimental reproduction code in the supplemental material.  
	
	\item {\bf Open access to data and code}  
	\item[] Question: Does the paper provide open access to the data and code, with sufficient instructions to faithfully reproduce the main experimental results, as described in supplemental material?  
	\item[] Answer: \answerYes{}  
	\item[] Justification: We provide anonymized reproduction code and instructions in the supplemental material. The synthetic experiments are generated from the stated model, and the real-feature experiments use public \texttt{scikit-learn} datasets described in Appendix~\ref{app:records}.  
	
	\item {\bf Experimental setting/details}  
	\item[] Question: Does the paper specify all the training and test details (e.g., data splits, hyperparameters, how they were chosen, type of optimizer) necessary to understand the results?  
	\item[] Answer: \answerYes{}  
	\item[] Justification: We specify the algorithm in Algorithm~\ref{alg:gsl} and describe the experimental settings, datasets, seeds, metrics, and task constructions in Section~\ref{sec:experiments} and Appendix~\ref{app:records}. Our method does not rely on iterative optimization, so there are no optimizer choices to report.  
	
	\item {\bf Experiment statistical significance}  
	\item[] Question: Does the paper report error bars suitably and correctly defined or other appropriate information about the statistical significance of the experiments?  
	\item[] Answer: \answerYes{}  
	\item[] Justification: Throughout the experimental section and appendix, we report means together with standard errors or success rates, and we state the number of random seeds used in each experiment. Appendix~\ref{app:records} also makes clear that the reported error bars are standard errors.  
	
	\item {\bf Experiments compute resources}  
	\item[] Question: For each experiment, does the paper provide sufficient information on the computer resources (type of compute workers, memory, time of execution) needed to reproduce the experiments?  
	\item[] Answer: \answerYes{}  
	\item[] Justification: We report the hardware and software environment in Appendix~\ref{app:records}, including GPU, CPU, RAM, storage, and library versions. 
	
	\item {\bf Code of ethics}  
	\item[] Question: Does the research conducted in the paper conform, in every respect, with the NeurIPS Code of Ethics \url{https://neurips.cc/public/EthicsGuidelines}?  
	\item[] Answer: \answerYes{}  
	\item[] Justification: To the best of our knowledge, yes. This work is primarily theoretical, uses synthetic data and public datasets, does not involve human subjects, and preserves anonymity in the submission.  
	
	\item {\bf Broader impacts}  
	\item[] Question: Does the paper discuss both potential positive societal impacts and negative societal impacts of the work performed?  
	\item[] Answer: \answerYes{}  
	\item[] Justification: We discuss broader impacts in Appendix~\ref{app:broader-impact}. There we describe possible benefits in multi-source sensing and measurement settings, as well as risks related to privacy, governance, fairness, and misuse of metadata.  
	
	\item {\bf Safeguards}  
	\item[] Question: Does the paper describe safeguards that have been put in place for responsible release of data or models that have a high risk for misuse (e.g., pre-trained language models, image generators, or scraped datasets)?  
	\item[] Answer: \answerNA{}  
	\item[] Justification: Our release does not include high-risk generative models, scraped datasets, or similar assets that would require special safeguards of that kind. The released asset is reproduction code for the theory-driven experiments in the paper.  
	
	\item {\bf Licenses for existing assets}  
	\item[] Question: Are the creators or original owners of assets (e.g., code, data, models), used in the paper, properly credited and are the license and terms of use explicitly mentioned and properly respected?  
	\item[] Answer: \answerYes{}  
	\item[] Justification: We cite the public datasets and software packages we use, including the \texttt{scikit-learn} datasets and package versions in Appendix~\ref{app:records}.
	
	\item {\bf New assets}  
	\item[] Question: Are new assets introduced in the paper well documented and is the documentation provided alongside the assets?  
	\item[] Answer: \answerYes{}  
	\item[] Justification: We provide complete experimental reproduction code together with instructions in the supplemental material. We do not introduce a new dataset or pretrained model beyond this code release.  
	
	\item {\bf Crowdsourcing and research with human subjects}  
	\item[] Question: For crowdsourcing experiments and research with human subjects, does the paper include the full text of instructions given to participants and screenshots, if applicable, as well as details about compensation (if any)?  
	\item[] Answer: \answerNA{}  
	\item[] Justification: This paper does not involve crowdsourcing or research with human subjects.  
	
	\item {\bf Institutional review board (IRB) approvals or equivalent for research with human subjects}  
	\item[] Question: Does the paper describe potential risks incurred by study participants, whether such risks were disclosed to the subjects, and whether Institutional Review Board (IRB) approvals (or an equivalent approval/review based on the requirements of your country or institution) were obtained?  
	\item[] Answer: \answerNA{}  
	\item[] Justification: This paper does not involve crowdsourcing or research with human subjects.  
	
	\item {\bf Declaration of LLM usage}  
	\item[] Question: Does the paper describe the usage of LLMs if it is an important, original, or non-standard component of the core methods in this research? Note that if the LLM is used only for writing, editing, or formatting purposes and does \emph{not} impact the core methodology, scientific rigor, or originality of the research, declaration is not required.  
	\item[] Answer: \answerNA{}  
	\item[] Justification: LLMs are not part of the core methods, experiments, or theoretical development in this research.  
	
\end{enumerate}

\end{document}